\newcommand{\argmax}{\operatornamewithlimits{argmax}}
\newcommand{\argmin}{\operatornamewithlimits{argmin}}
\newcommand{\setlim}{\operatornamewithlimits{setlim}}
\newcommand{\defeq}{\stackrel{\text{def}}{=}}
\newtheorem{thm}{Theorem}
\newtheorem{lemma}{Lemma}
\newtheorem{defi}{Definition}
\begin{document}
\begin{frontmatter}
\title{Risk-averse estimation, an axiomatic approach to inference, and Wallace-Freeman without MML}

\author[monash]{Michael Brand}
\ead{michael.brand@monash.edu}

	\address[monash]{Faculty of IT (Clayton), Monash University, Clayton, VIC 3800, Australia}

\begin{abstract}
We define a new class of Bayesian point estimators, which we
refer to as risk averse. Using this definition, we formulate axioms that
provide natural requirements for inference, e.g.\ in a scientific setting,
and show that for
well-behaved estimation problems the axioms uniquely characterise an
estimator. Namely, for estimation problems in which some parameter values have
a positive posterior probability (such as, e.g., problems with a discrete
hypothesis space), the axioms characterise Maximum A Posteriori (MAP)
estimation, whereas elsewhere (such as in continuous estimation) they
characterise the Wallace-Freeman estimator.

Our results provide a novel justification for the Wallace-Freeman estimator,
which previously was derived only as an approximation to the
information-theoretic Strict Minimum Message Length
estimator. By contrast, our derivation requires neither approximations nor
coding.
\end{abstract}
\begin{keyword}
Axiomatic Approach \sep Bayes Estimation \sep Inference \sep MML \sep Risk-Averse \sep Wallace-Freeman
\end{keyword}
\end{frontmatter}

\section{Introduction}

One of the fundamental statistical problems is point estimation. In a
Bayesian setting, this can be described as follows. Let
$(\boldsymbol{x},\boldsymbol{\theta}) \in X \times \Theta$
be a pair of random variables with a known joint distribution that
assigns positive
probability / probability density to any $(x,\theta)\in X\times\Theta$.
Here, $x$ is known as the \emph{observation}, $X\subseteq\mathbb{R}^N$ as
\emph{observation space}, $\theta$ as the \emph{parameter} and
$\Theta\subseteq \mathbb{R}^M$ as
\emph{parameter space}. We aim to describe a function
$\hat{\theta}:X\to\mathbb{R}^M$ such that $\hat{\theta}(x)$ is our
``best guess'' for $\boldsymbol{\theta}$ given $\boldsymbol{x}=x$.

Such a problem appears frequently for example in scientific inference, where we
aim to decide on a theory that best fits the known set of experimental results.

The optimal choice of a ``best guess'' $\hat{\theta}(x)$ naturally depends on
our definition of ``best''. The most common Bayesian approach regarding this is
that used by Bayes estimators \citep{berger2013statistical}, which define
``best'' explicitly, by means of a loss function. This allows estimators to
optimally trade off different types of errors, based on their projected
costs.

In this paper, we examine the situation where errors of all forms are
extremely costly and should therefore be minimised and if possible avoided,
rather than factored in. The scientific scenario, where one aims to decide on
a single theory, rather than a convenient trade-off between multiple hypotheses,
is an example. We
define this scenario rigorously under the name \emph{risk-averse estimation}.

We show that for problems in which some $\theta$ values have a positive
posterior probability, the assumption of risk-averse estimation
is enough to uniquely characterise Maximum A Posteriori (MAP),
\[
\hat{\theta}_{\text{MAP}}(x)\defeq\argmax_\theta \mathbf{P}(\boldsymbol{\theta}=\theta|\boldsymbol{x}=x)=\argmax_\theta \mathbf{P}(\theta|x)
\]

Risk-averse estimation does not suffice alone, however, to uniquely
characterise a solution for continuous problems, i.e.\ problems where
the joint distribution of
$(\boldsymbol{x},\boldsymbol{\theta})$ can be described by a probability
density function $f=f^{(\boldsymbol{x},\boldsymbol{\theta})}$.
To do so, we introduce three additional axioms, two of which relate to
invariance to representation and the last to invariance to irrelevant
alternatives, which reflect natural requirements for a good inference
procedure, all of which are also met by MAP.

(Notably, the estimator that maximises the posterior probability density
$f(\theta|x)$, which in the literature is usually also named MAP,
does not satisfy invariance to representation. To avoid confusion, we
refer to it as $f$-MAP.)

We prove regarding our risk-aversion assumption and three additional axioms
that together (and only together) they do uniquely characterise a single
estimation
function in the continuous case, namely the Wallace-Freeman estimator (WF)
\citep{WallaceFreeman1987},
\[
\hat{\theta}_\text{WF}(x)\defeq\argmax_{\theta} \frac{f(\theta|x)}{\sqrt{|I_\theta|}},
\]
where $I_\theta$ is the Fisher information matrix \citep{lehmann2006theory},
whose $(i,j)$ element is the conditional expectation
\[
I_\theta(i,j)\defeq\mathbf{E}\left(\left(\frac{\partial \log f(\boldsymbol{x}|\theta)}{\partial \theta(i)}\right)\left(\frac{\partial \log f(\boldsymbol{x}|\theta)}{\partial \theta(j)}\right)\middle|\boldsymbol{\theta}=\theta\right).
\]

A scenario not covered by either of the above is one where
$\boldsymbol{\theta}$ is a continuous variable but $\boldsymbol{x}$ is
discrete. To handle this case, we introduce
a fourth axiom, relating to invariance to superfluous information.

This creates a set of four axioms that is both symmetric and aesthetic: two
axioms relate to representation invariance (one in parameter space, the other
in observation space), and two relate to invariance to irrelevancies
(again, one in each domain).

We show that the four axioms together (and only together) uniquely characterise
WF also in the remaining case.

The fact that our axioms uniquely characterise the
Wallace-Freeman estimator is in itself of interest, because this
estimator exists almost exclusively as part of Minimum Message Length (MML)
theory \citep{Wallace2005}, and even there is defined merely as a
computationally convenient approximation to Strict MML (SMML)
\citep{wallace1975invariant}, which MML theory considers to be the optimal
estimator, for information-theoretical reasons.

Importantly, because SMML is computationally intractable in all but the
simplest cases \citep{farrwallace2002}, it is generally not used directly,
and MML practitioners are encouraged instead to approximate it by MAP in the
discrete case and by WF in the continuous
(See, e.g., \cite{comley200511}, p.~268). Thus, MML's standard practice
coincides with what risk-averse estimation advocates.
However, in the case of risk-averse estimation, neither MAP nor WF is
an approximation. Rather, they are both optimal estimators in their own rights
(within their respective domains),
and the justifications given for them are purely Bayesian
and involve no coding theory.

Thus, risk-averse estimation provides a new theoretical foundation, unrelated
to MML, that explains the empirical success of the MML recipe, for which
recent examples include
\cite{sumanaweera2018bits,schmidt2016minimum,saikrishna2016statistical,jin2015two,kasarapu2014representing}.

\section{Background}

\subsection{Bayes estimation}\label{S:Bayes}

The most commonly used class of Bayesian estimators is Bayes estimators.
A \emph{Bayes estimator}, $\hat{\theta}_L$, is defined
over a \emph{loss function},
\[
L=L_{(\boldsymbol{x},\boldsymbol{\theta})}:\Theta\times\Theta\to\mathbb{R}^{\ge 0},
\]
where $L(\theta_1,\theta_2)$ represents the cost of choosing $\theta_2$ when
the true value of $\boldsymbol{\theta}$ is $\theta_1$. The estimator
chooses an estimate that minimises the expected
loss given the observation, $x$:
\[
\hat{\theta}_L(x)\defeq\argmin_{\theta\in\Theta} \mathbf{E}(L(\boldsymbol{\theta},\theta)|\boldsymbol{x}=x).
\]

We denote by $P_\theta=P_{\boldsymbol{x}|\boldsymbol{\theta}=\theta}$ the
distribution of $\boldsymbol{x}$ at $\boldsymbol{\theta}=\theta$, i.e.\ the
likelihood of $\boldsymbol{x}$ given $\theta$,
and assume for all estimation problems and loss functions
\begin{equation}\label{Eq:distinct}
L(\theta_1,\theta_2)=0 \Leftrightarrow \theta_1=\theta_2 \Leftrightarrow P_{\theta_1}=P_{\theta_2}.
\end{equation}

When the distribution of $\boldsymbol{x}$ is known to be continuous, we
denote by $f_\theta=f^{(\boldsymbol{x},\boldsymbol{\theta})}_\theta$ the
probability density function (pdf) of $P_\theta$, i.e.\ 
$f^{(\boldsymbol{x},\boldsymbol{\theta})}_\theta(x)\defeq f^{(\boldsymbol{x},\boldsymbol{\theta})}(x|\theta)$.
Throughout, where $\boldsymbol{x}$ is known to be continuous, we use
$f_{\theta}$ interchangeably with $P_{\theta}$, and, in general, pdfs
interchangeably with the distributions they represent, e.g.\ in notation
such as ``$\boldsymbol{x}\sim f$'' for ``$\boldsymbol{x}$ is a
random variable with distribution (pdf) $f$''.

We say that $L$ is \emph{discriminative} for an estimation problem
$(\boldsymbol{x},\boldsymbol{\theta})$ if for every $\theta\in\Theta$ and
every neighbourhood $B$ of $\theta$, the infima over
$\theta'\in\Theta\setminus B$ of both $L(\theta,\theta')$ and
$L(\theta',\theta)$ are positive.

Notably, Bayes estimators are invariant to a linear monotone increasing
transform in $L$. They may also be defined over a \emph{gain function},
$G$, where $G$ is the result of a monotone \emph{decreasing} affine
transform on a loss function.

Examples of Bayes estimators
are posterior expectation, which minimises qua\-drat\-ic loss, and MAP,
which minimises loss over the discrete metric.
In general, Bayes estimators such as posterior expectation may return a
$\hat{\theta}$ value that is not in $\Theta$. This demonstrates how their
trade-off of errors may make them unsuitable for a high-stakes
``risk-averse'' scenario.

\subsection{Set-valued estimators}

Before defining risk-averse estimation, we must make a note regarding
set-valued estimators.

Typically, estimators are considered as functions from the observation space
to (extended) parameter space, $\hat{\theta}:X\to\mathbb{R}^M$.
However, all standard
point estimators are defined by means of an $\argmin$ or an $\argmax$.
Such functions intrinsically allow the result to be a subset of $\mathbb{R}^M$,
rather than an element of $\mathbb{R}^M$.

We say that an estimator is a \emph{well-defined point estimator} for
$(\boldsymbol{x},\boldsymbol{\theta})$ if it returns a single-element set for
every $x\in X$, in which case we take this element to be its estimate.
Otherwise, we say it is a \emph{set estimator}. The set estimator, in turn,
is \emph{well-defined} on $(\boldsymbol{x},\boldsymbol{\theta})$ if it
does not return an empty set as its estimate for any $x\in X$.

All estimators discussed will therefore be taken to be set estimators, and the
use of point-estimator notation should be considered solely as
notational convenience.

We also define \emph{set limit} and use the notation
\[
\setlim_{k\to\infty} B_k,
\]
where $\left(B_k\right)_{k\in\mathbb{N}}$ is a sequence of sets with an
eventually bounded union (i.e., there exists a $k$, such that
$\bigcup_{i\ge k} B_i$ is bounded),
to mean the set $\Omega$ of elements $\omega$ for
which there exists a monotone increasing sequence of naturals $k_1,k_2,\ldots$
and a sequence $\omega_1,\omega_2,\ldots$, such that for each $i$,
$\omega_i\in B_{k_i}$
and
$\lim_{i\to\infty} \omega_i = \omega$.

\section{Risk-averse estimation}

The idea behind MAP is to maximise the posterior probability that the
estimated value
is the correct $\boldsymbol{\theta}$ value. In the continuous domain this
cannot hold verbatim, because all $\theta\in\Theta$ have probability
zero. Instead, we translate the notion into the continuous domain by
maximising the probability that the estimated value is \emph{essentially}
the correct value. The way to do this is as follows.

\begin{defi}
A continuously differentiable, monotone decreasing function,
$A:\mathbb{R}^{\ge 0}\to\mathbb{R}^{\ge 0}$, satisfying
\begin{enumerate}
\item $A(0)>0$,
\item $\exists a_0 \forall a\ge a_0, A(a_0)=0$,
\end{enumerate}
will be called an \emph{attenuation function}, and the minimal $a_0$ will
be called its \emph{threshold value}.
\end{defi}

\begin{defi}
Let $L$ be the loss function of a Bayes estimator and $A$ an
attenuation function.

We define a \emph{risk-averse estimator} over $L$ and $A$
to be the estimator satisfying
\[
\hat{\theta}_{L,A}(x)=\setlim_{k\to\infty} \hat{\theta}_k(x),
\]
where $\hat{\theta}_k$ is the Bayes estimator whose gain function is
\[
G_k(\theta_1,\theta_2)=A(kL(\theta_1,\theta_2)).
\]

By convention we will assume $A(0)=1$, noting
that this value can be set by applying a positive multiple to the gain
function, which does not affect the definition of the estimator.
\end{defi}

The rationale behind this definition is that we use a loss function, $L$, to
determine how similar or different $\theta_2$ is to $\theta_1$, and then use
an attenuation function, $A$, to translate this divergence into a gain
function, where a $1$ indicates an exact match and a $0$ that $\theta_2$ is
not materially similar to $\theta_1$. (Such a gain function is often referred to
as a similarity measure.) The parameter $k$ is then
used to contract the neighbourhood of partial similarity, to the point that
anything that is not ``essentially identical'' to $\theta_1$ according to the
loss function is considered a $0$. Note that this is done without distorting the
loss function, as $k$ merely introduces a linear multiplication over it, a
transformation that preserves not only the closeness ordering of pairs but
also the Bayes estimator defined on the scaled function.

In this way, the risk-averse estimator maximises the
probability that $\theta_2$ is essentially identical to $\theta_1$, while
preserving our notion, codified in $L$, of how various $\theta$ values
interrelate.

\section{Positive probability events}\label{S:discrete}

\begin{thm}\label{T:MAP}
Any risk-averse estimator, $\hat{\theta}_{L,A}$, regardless of its loss
function $L$ or its attenuation function $A$, satisfies for any $x$ in any
estimation problem $(\boldsymbol{x},\boldsymbol{\theta})$ in which there
exists a $\theta\in\Theta$ with a positive posterior probability that
\[
\hat{\theta}_{L,A}(x) \subseteq \hat{\theta}_{\text{MAP}}(x)
\]
and is a nonempty set, provided $L$ is discriminative for the estimation
problem. In particular, $\hat{\theta}_{L,A}$ is in all such cases a
well-defined set estimator, and where MAP is a well-defined point estimator,
so is $\hat{\theta}_{L,A}$, and
\[
\hat{\theta}_{L,A}=\hat{\theta}_{\text{MAP}}.
\]
\end{thm}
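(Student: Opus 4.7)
The plan is to analyse the Bayes estimators $\hat{\theta}_k(x)=\argmax_{\theta_2}g_k(\theta_2)$, where
\[
g_k(\theta_2)\defeq \mathbf{E}(A(kL(\boldsymbol{\theta},\theta_2))\mid\boldsymbol{x}=x),
\]
show that $g_k(\theta_2)\to \mathbf{P}(\theta_2\mid x)$ pointwise, deduce that the maximisers concentrate on the MAP set, and finally take the set limit. The pointwise convergence is the backbone: split the posterior into the atom at $\theta_2$ (which contributes $\mathbf{P}(\theta_2\mid x)\cdot A(0)=\mathbf{P}(\theta_2\mid x)$ for every $k$, since $L(\theta_2,\theta_2)=0$ by~\eqref{Eq:distinct}) and the remainder, on which $L(\theta_1,\theta_2)>0$ by~\eqref{Eq:distinct}. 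On the remainder the integrand $A(kL(\theta_1,\theta_2))$ is bounded by $A(0)=1$ and eventually vanishes, since $A$ has a finite threshold; dominated convergence then disposes of it in both the atomic and continuous parts.

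Next, the hypothesis that some $\theta$ has positive posterior probability gives $M\defeq\sup_\theta\mathbf{P}(\theta\mid x)>0$, and summability of the atomic masses shows $M$ is attained, so $\hat{\theta}_{\text{MAP}}(x)$ is nonempty. For any $\theta^*\in\hat{\theta}_{\text{MAP}}(x)$ the previous step gives $g_k(\theta^*)\to M$, so every $\theta_k\in\hat{\theta}_k(x)$ satisfies $g_k(\theta_k)\ge g_k(\theta^*)\to M$. For a matching upper bound along a convergent subsequence $\theta_{k_i}\to\omega$, I would use the crude threshold estimate $g_k(\theta_2)\le \mathbf{P}(L(\boldsymbol{\theta},\theta_2)<a_0/k\mid x)$. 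The discriminative assumption at $\omega$ supplies $\inf_{\theta_1\notin B}L(\theta_1,\omega)>0$ for every neighbourhood $B$ of $\omega$; transferring this from $\omega$ to the nearby $\theta_{k_i}$ forces the relevant $L$-sublevel set to be eventually contained in $B$, whence $\limsup_i g_{k_i}(\theta_{k_i})\le \mathbf{P}(B\mid x)$. Letting $B$ shrink to $\{\omega\}$ gives $\mathbf{P}(\omega\mid x)\ge M$, so $\omega$ is MAP. Hence $\setlim_{k\to\infty} \hat{\theta}_k(x)\subseteq \hat{\theta}_{\text{MAP}}(x)$, and a standard compactness argument around any fixed MAP point produces at least one such subsequential limit, making the set limit nonempty. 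When MAP is a singleton, the inclusion becomes equality and $\hat{\theta}_{L,A}$ is a well-defined point estimator coinciding with MAP.

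The main obstacle is the transfer step just described: the discriminative assumption controls $L(\cdot,\omega)$ at the \emph{fixed} $\omega$, but the quantity appearing in $g_{k_i}(\theta_{k_i})$ is $L(\cdot,\theta_{k_i})$ at a \emph{moving} $\theta_{k_i}\to\omega$. Bridging this---either by invoking regularity of $L$ in its second argument, or by re-expressing the upper bound directly in terms of $L(\cdot,\omega)$, for instance by nesting sublevel sets $\{\theta_1:L(\theta_1,\theta_{k_i})<a_0/k_i\}\subseteq\{\theta_1:L(\theta_1,\omega)<\eta\}$ for $\theta_{k_i}$ close enough to $\omega$---is the technical linchpin of the proof, with a secondary concern being the verification that the argmaxes are actually attained for sufficiently large $k$.
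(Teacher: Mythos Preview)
Your setup and the pointwise convergence $g_k(\theta_2)\to\mathbf{P}(\theta_2\mid x)$ are correct, and you have correctly located the difficulty: passing from pointwise convergence to control of the moving argmax. But the obstacle you flag in your last paragraph is genuine and is not resolved by either of the fixes you propose. Discriminativity is a \emph{pointwise} hypothesis: it bounds $L(\cdot,\theta)$ and $L(\theta,\cdot)$ away from zero off neighbourhoods of a fixed $\theta$, with no uniformity as that $\theta$ varies. Your proposed nesting $\{L(\cdot,\theta_{k_i})<a_0/k_i\}\subseteq\{L(\cdot,\omega)<\eta\}$ is precisely a continuity statement for $L$ in its second argument, which is not assumed in this theorem (smoothness of $L$ enters only later, for the continuous case). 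So the transfer step fails, and with it the upper-bound half of your sandwich.

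The paper's proof sidesteps the moving-target problem by reversing the roles of the two arguments of $L$. Instead of trying to shrink $N_k(\hat\theta_k)=\{\theta':L(\theta',\hat\theta_k)<a_0/k\}$ via discriminativity at $\hat\theta_k$ or at its limit, it first shows by a mass-counting argument that $N_k(\hat\theta_k)$ must \emph{contain} one of the finitely many MAP atoms $\theta_1,\dots,\theta_n$: separate the posterior atoms into the MAP set $S$ (each of mass $v_0$), a finite buffer $T=\{\theta_{n+1},\dots,\theta_m\}$, and a tail of total mass below $v_0-v_1$; disjoint neighbourhoods $B_i$ of the $\theta_i$ together with $k>a_0/\min_i\inf_{\theta'\notin B_i}L(\theta_i,\theta')$ force each $N_k(\cdot)$ to meet $S\cup T$ in at most one point, so missing $S$ drops the value below $v_0$. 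The containment $\theta_i\in N_k(\hat\theta_k)$ reads $L(\theta_i,\hat\theta_k)<a_0/k$, and now discriminativity is applied at the \emph{fixed} atom $\theta_i$ in its first-argument form, $\inf_{\theta'\notin B}L(\theta_i,\theta')>0$, forcing $\hat\theta_k\in B$ for any prescribed neighbourhood $B$ of $\theta_i$ once $k$ is large. This also yields $\hat\theta_k(x)\subseteq\bigcup_{i\le n}B_i$ eventually, giving eventual boundedness and---after partitioning the sequence by the containing $B_i$---nonemptiness of the set limit, so your secondary compactness concern is handled as a by-product rather than requiring a separate argument.
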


\begin{proof}
The risk-averse estimator problem is defined by
\begin{equation}\label{Eq:LAdist}
\hat{\theta}_{L,A}(x)=\setlim_{k\to\infty} \argmax_{\theta} \mathbf{E}\left[A(kL(\boldsymbol{\theta},\theta))\middle| x\right].
\end{equation}

Fix $x$, and let $V_k(\theta)=\mathbf{E}\left[A(kL(\boldsymbol{\theta},\theta))\middle| x\right]$.

Let $N_k(\theta)$ be the set of $\theta'$ for which $A(kL(\theta',\theta))$ is
positive.
The value of $V_k(\theta)$ is bounded from both sides by
\begin{equation}\label{Eq:Vbounds}
\mathbf{P}(\theta|x)\le V_k(\theta)\le \mathbf{P}(\boldsymbol{\theta}\in N_k(\theta)|x).
\end{equation}

Because, by discriminativity of $L$, for any neighbourhood $N$ of $\theta$
there is a $k$ value from which
$N_k(\theta)\subseteq N$, as $k$ goes to infinity both bounds
converge to $\mathbf{P}(\theta|x)$. So, this is the limit for $V_k(\theta)$.
Also, $V_k(\theta)$ is a monotone decreasing function of $k$.

The above proves that
\[
\argmax_{\theta}\lim_{k\to\infty} \mathbf{E}\left[A(kL(\boldsymbol{\theta},\theta))\middle| x\right]
\]
is the MAP solution. To show that it is also the limit of the argmax (i.e.,
when switching back to the order of the quantifiers in \eqref{Eq:LAdist}), we
need to show certain uniformity
properties on the speed of convergence, which is what the remainder of this
proof is devoted to.

Let $\Omega=\{\theta\in\Theta|\mathbf{P}(\theta|x)>0\}$,
and define an enumeration $\left(\theta_i\right)_i$ over $\Omega$,
where the $\theta_i$ values are sorted by descending $\mathbf{P}(\theta_i|x)$.
(Such an enumeration is not necessarily unique.)
If $\Omega$ is countably infinite, the
values of $i$ range in $\mathbb{N}$. Otherwise, it is a finite enumeration,
with $i$ in $\{1,\ldots,|\Omega|\}$.

Let $S$ be the set $\{\theta_i:1\le i\le n\}$ for which
$\mathbf{P}(\theta|x)$ attains its maximum value, $v_0$.

Let $v_1=\mathbf{P}(\theta_{n+1}|x)$.

Because we know that for all $\theta\notin\Omega$ $V_k(\theta)$ is monotone
decreasing and tending to zero, there is for each such $\theta$ a threshold
value, $k'$, such that if $k\ge k'$, $V_k(\theta)<v_0$.
When this is the case, $\theta$ can clearly no longer be part of the argmax
in \eqref{Eq:LAdist}. Let $\Theta_k$ be the subset of $\Theta$ not thus
excluded at $k$.

By discriminativity of $L$, for any subset $\Theta'$ such that
$\Omega\subseteq\Theta'\subseteq\Theta$ and any
$\theta\in\Theta\setminus\Theta'$ there is a threshold value $k'$ such that
for all $k\ge k'$, there is no $\theta'\in\Theta'$ such that
$\theta\in N_{k}(\theta')$.

Combining these two observations, let $\Theta'_k$ be the set of $\theta$ such
that there is some $\theta'$ in $\Theta_k$ for which
$\theta\in N_k(\theta')$. We conclude that as $k$ grows to infinity, the
probability $\mathbf{P}(\boldsymbol{\theta}\in\Theta'_k\setminus\Omega|x)$
tends to zero. In particular, there exists a threshold value, which we will
name $k^{*}$, for which this probability is lower than $v_0-v_1$.

Let $U$ be a set $\Theta'_{k^{*}}\setminus\{\theta_i:i\le m\}$, where $m\ge n$
is such that
\begin{equation}\label{Eq:sumv0v1}
\mathbf{P}(\boldsymbol{\theta}\in U|x)<v_0-v_1.
\end{equation}
The choice of $m$ is not unique. However, such an $m$ always exists.

Define $T=\{\theta_{n+1},\ldots,\theta_m\}$.
Importantly, sets $S$ and $T$ are both finite.

Let $\{B_i\}_{i=1,\ldots,m}$ be a set of neighbourhoods of
$\{\theta_i\}_{i=1,\ldots,m}$, respectively, such that no two neighbourhoods
intersect. Because this set of $\theta$ values is finite, there is a minimum
distance between any two $\theta$ and therefore such neighbourhoods exist.

For each $i\in 1,\ldots,m$, let
$\delta_i=\inf_{\theta'\in\Theta\setminus B_i} L(\theta_i,\theta')$.

Because $L$ is discriminative,
all $\delta_i$ are positive. Because this is a finite set,
$\delta_{\text{min}}=\min_i \delta_i$ is also positive.

Consider now values of $k$ which are larger than
$\max(a_0/\delta_{\text{min}},k^{*})$,
where $a_0$ is the attenuation function's threshold value.

Because we chose all $B_i$ to be without intersection, any $\theta\in\Theta$
can be in at most one $B_i$. For $k$ values as described, only $\theta$
values in $B_i$ can have $\theta_i\in N_k(\theta)$. In particular, each
$N_k(\theta)$ can contain at most one of $\theta_1,\ldots,\theta_m$.

By \eqref{Eq:sumv0v1}, any such neighbourhood that does not contain one of
$\theta_1,\ldots,\theta_n$, i.e.\ set $S$, the MAP solutions, has a
$V_k(\theta)$ value lower than $v_0$, the lower bound for $V_k(\theta_1)$
given in \eqref{Eq:Vbounds}. This is because for a $\theta\in\Theta_{k^{*}}$,
the total value from all elements
in $U$ can contribute, by construction, less than $v_0-v_1$, whereas the one
element from $T$ that may be in the same neighbourhood can contribute no
more than $v_1$. On the other hand, a $\theta\notin\Theta_{k^{*}}$ has
$V_k(\theta)<v_0$ by the definition of $\Theta_{k^{*}}$.

Therefore, any $\hat{\theta}_k \in \argmax_\theta V_k(\theta)$ must
have an $N_k(\hat{\theta}_k)$ containing exactly one of
$\theta_1,\ldots,\theta_n$.

Let us partition any sequence $\left(\hat{\theta}_k\right)_{k\in\mathbb{N}}$
of such elements $\hat{\theta}_k$
according to the element of $S$ contained in $N_k(\hat{\theta}_k)$,
discarding any subsequence that is finite.

Consider now only the subsequence $k_1,k_2,\ldots$ such that
$\theta_i\in N_{k_j}(\hat{\theta}_{k_j})$ for some fixed $i\le n$.

By the same logic as before, because $L$ is discriminative,
for any neighbourhood $B$ of $\theta_i$
$\inf_{\theta'\in\Theta\setminus B} L(\theta_i,\theta')>0$, and therefore
there exists a $K$ value such that for all $k_j\ge K$, if
$\theta_i\in N_{k_j}(\theta')$ then $\theta'\in B$.

We conclude, therefore, that $\hat{\theta}_{k_j}\in B$ for all
sufficiently large $j$. By definition, the $\hat{\theta}_{k_j}$ sequence
therefore converges to $\theta_i$, and the set limit of the entire sequence
is the subset of the MAP solution, $\{\theta_1,\ldots,\theta_n\}$, for which
such infinite subsequences $k_1,k_2,\ldots$ exist.

Because the entire sequence is infinite, at least one of the subsequences
will be infinite, hence the risk-averse solution is never the empty set.
\end{proof}

\section{The axioms}\label{S:axioms}

We now describe additional good properties satisfied by the MAP estimator
which make it suitable for scenarios such as scientific inference. These
natural desiderata will form axioms of inference, which we will then
investigate outside the discrete setting.

Our interest is in investigating inference and estimation in situations
where all errors are highly costly, and hence we begin with an implicit
``Axiom 0'' that all estimators investigated are risk averse.

Our remaining axioms are not regarding the estimators themselves, but rather
regarding what constitutes a reasonable loss function for such estimators.
We maintain that these axioms
can be applied equally in all situations in which loss functions are
used, such as with Bayes estimators.

In all axioms, our requirement is that the loss function
$L$ satisfies the specified conditions
for every estimation problem $(\boldsymbol{x},\boldsymbol{\theta})$, and
every pair of parameters $\theta_1$ and $\theta_2$ in parameter space.

As always, we take the parameter space to be $\Theta\subseteq\mathbb{R}^M$ and
the observation space to be $X\subseteq\mathbb{R}^N$.

\begin{description}[style=unboxed]
	\item[\textbf{Axiom 1: Invariance to Representation of Parameter Space (IRP)}]\ \newline
A loss function $L$ is said to satisfy IRP if
for every invertible, continuous, differentiable function
$F:\mathbb{R}^M \to \mathbb{R}^M$, whose Jacobian is defined and non-zero everywhere,
\[
L_{(\boldsymbol{x},\boldsymbol{\theta})}(\theta_1,\theta_2)
=L_{(\boldsymbol{x},F(\boldsymbol{\theta}))}(F(\theta_1),F(\theta_2)).
\]

\item[\textbf{Axiom 2: Invariance to Representation of Observation Space (IRO)}]\ \newline
A loss function $L$ is said to satisfy IRO if
for every invertible, piecewise continuous, differentiable function
$G:\mathbb{R}^N \to \mathbb{R}^N$, whose Jacobian is defined and non-zero everywhere,
\[
L_{(\boldsymbol{x},\boldsymbol{\theta})}(\theta_1,\theta_2)
=L_{(G(\boldsymbol{x}),\boldsymbol{\theta})}(\theta_1,\theta_2).
\]

\item[\textbf{Axiom 3: Invariance to Irrelevant Alternatives (IIA)}]\ \newline
A loss function $L$ is said to satisfy IIA if
$L_{(\boldsymbol{x},\boldsymbol{\theta})}(\theta_1,\theta_2)$ does not depend on
any detail of the joint distribution of $(\boldsymbol{x},\boldsymbol{\theta})$
(described in the continuous case by the pdf $f(x,\theta)$)
other than at $\boldsymbol{\theta}\in\{\theta_1,\theta_2\}$.
\item[\textbf{Axiom 4: Invariance to Superfluous Information (ISI)}]\ \newline
A loss function $L$ is said to satisfy ISI if for any random variable
$\boldsymbol{y}$ such that $\boldsymbol{y}$ is independent of
$\boldsymbol{\theta}$ given $\boldsymbol{x}$,
\[
L_{((\boldsymbol{x},\boldsymbol{y}),\boldsymbol{\theta})}(\theta_1,\theta_2)
=L_{(\boldsymbol{x},\boldsymbol{\theta})}(\theta_1,\theta_2).
\]
\end{description}

A loss function that satisfies both IRP and IRO
is said to be \emph{representation invariant}.

The conditions of representation invariance follow \cite{wallace1975invariant},
whereas IIA was first introduced in a game-theoretic context by
\cite{nash1950bargaining}.

The ISI axiom is one we need neither in the positive probability case
discussed above nor in the continuous case of the next section. However,
we will use it in the remaining case, of discrete observations with a
continuous parameter space.

\section{The continuous case}\label{S:continuous}

\subsection{Well-behaved problems}

We now move to the harder case, where the distribution of $\boldsymbol{\theta}$
is continuous and none of its values is assigned a positive posterior
probability. We refer to this as the
\emph{$\boldsymbol{\theta}$-continuous case}.

We begin our exploration by looking at the special sub-case where
the joint distribution of $(\boldsymbol{x},\boldsymbol{\theta})$ is given by
a probability density function $f=f^{(\boldsymbol{x},\boldsymbol{\theta})}$.
We refer to this as \emph{the continuous case}. Much of the machinery we
develop for the continuous case will be reused, however, in the next
section, where we discuss problems with a discrete $\boldsymbol{x}$ but a
continuous $\boldsymbol{\theta}$. For this reason, where possible, we
describe our results in this section in terminology more general than is
needed purely for handling the continuous case.

We show that in the continuous case
for any well-behaved estimation problem and well-behaved loss function $L$,
if $L$ satisfies the first three invariance axioms of Section~\ref{S:axioms},
any risk-averse estimator over $L$ equals the Wallace-Freeman estimator,
regardless of its attenuation function.

Note that unlike in the discrete $\boldsymbol{\theta}$ case, in the
$\boldsymbol{\theta}$-continuous case we restrict our analysis to
``well-behaved'' problems.
The reason for this is mathematical convenience and simplicity of
presentation.

In this section we define well-behavedness. The definition will be one we
will reuse for analysing also the $\boldsymbol{\theta}$-continuous case.
However, some well-behavedness requirements for continuous problems are
not meaningful for distributions with a discrete $\boldsymbol{x}$, so the
definition states explicitly how the requirements are reduced for the
more general $\boldsymbol{\theta}$-continuous case.

We refer to a continuous/$\boldsymbol{\theta}$-continuous estimation problem as
\emph{well-behaved} if it satisfies the following criteria.

\begin{enumerate}
\item For continuous problems: the function $f(x,\theta)$ is
piecewise continuous in $x$ and three-times continuously differentiable in
$\theta$. If, alternatively, $\boldsymbol{x}$ is discrete, we merely require
that for every $x$, $f(\theta|x)$ is three-times continuously differentiable in
$\theta$.
\item The set $\Theta$ is a compact closure of an open set.
\end{enumerate}

Additionally, we say that a loss function $L$ is well-behaved if it
satisfies the following conditions.

\begin{description}
\item[\textbf{Smooth:}] If $(\boldsymbol{x},\boldsymbol{\theta})$ is a
well-behaved continuous/$\boldsymbol{\theta}$-continuous estimation problem,
then the function
$L_{(\boldsymbol{x},\boldsymbol{\theta})}(\theta_1,\theta_2)$ is three times
differentiable in $\theta_1$ and these derivatives are continuous in
$\theta_1$ and $\theta_2$.
\item[\textbf{Sensitive:}] There exists at least one well-behaved
continuous/$\boldsymbol{\theta}$-continuous estimation problem
$(\boldsymbol{x},\boldsymbol{\theta})$ and at least one choice of $\theta_0$,
$i$ and $j$ such that
\[
\left.\frac{\partial^2 L_{(\boldsymbol{x},\boldsymbol{\theta})}(\theta,\theta_0)}{\partial\theta(i)\partial\theta(j)}\right\rvert_{\theta=\theta_0}\ne 0.
\]
\item[\textbf{Problem-continuous:}] (For continuous estimation problems only:)
$L$ is problem-con\-tin\-u\-ous (or ``$\mathcal{M}$-continuous''), in the sense that
if $\left((\boldsymbol{x}_i,\boldsymbol{\theta})\right)_{i\in\mathbb{N}}$
is a sequence of well-behaved continuous estimation problems, such
that for every $\theta\in\Theta$,
$\left(f^{(\boldsymbol{x}_i,\boldsymbol{\theta})}_\theta\right)\xrightarrow{\mathcal{M}} f^{(\boldsymbol{x},\boldsymbol{\theta})}_{\theta}$,
then for every $\theta_1,\theta_2\in\Theta$,
\[
\lim_{i\to\infty} L_{(\boldsymbol{x}_i,\boldsymbol{\theta})}(\theta_1,\theta_2)=L_{(\boldsymbol{x},\boldsymbol{\theta})}(\theta_1,\theta_2).
\]
\end{description}

In the last criterion, the symbol ``$\xrightarrow{\mathcal{M}}$'' indicates
convergence in measure \citep{halmos2013measure}. This is defined as follows.
Let $\mathcal{M}$ be the space of normalisable, non-atomic measures over some
$\mathbb{R}^s$, let $f$ be a function $f:\mathbb{R}^s\to\mathbb{R}^{\ge 0}$ and
let $\left(f_i\right)_{i\in\mathbb{N}}$ be a sequence of such
functions. Then $\left(f_i\right)\xrightarrow{\mathcal{M}} f$ if
\begin{equation}\label{Eq:measure0}
\forall\epsilon>0,\lim_{i\to\infty}\mu(\{x\in\mathbb{R}^s:|f(x)-f_i(x)|\ge \epsilon\})=0,
\end{equation}
where $\mu$ can be, equivalently, any measure in $\mathcal{M}$ whose support
is at least the union of the support of $f$ and all $f_i$.

We will usually take $f$ and all $f_i$ to be pdfs.
When this is the case, $\mu$'s support only needs to equal the support
of $f$. Furthermore,
because $\mu$ is normalisable, one can always choose values $a$ and
$b$ such that $\mu(\{x:0<f(x)<a\})$ and $\mu(\{x:f(x)>b\})$
are both arbitrarily small, for which reason one can substitute the absolute
difference ``$\ge\epsilon$'' in \eqref{Eq:measure0} with a relative difference
``$\ge\epsilon f(x)$'', and reformulate it in the case that $f$ and all
$f_i$ are pdfs as
\begin{equation}\label{Eq:measure_ratio}
\forall\epsilon>0,\lim_{i\to\infty}\mathbf{P}_{\boldsymbol{x}\sim f}(|f(\boldsymbol{x})-f_i(\boldsymbol{x})|\ge \epsilon f(\boldsymbol{x}))=0.
\end{equation}

This reformulation makes it clear that convergence in measure over pdfs
is a condition
independent of representation: it is invariant to transformations of the
sort we allow on the observation space.

\subsection{The main theorem}

Our main theorem for continuous problems is as follows.

\begin{thm}\label{T:main}
If $(\boldsymbol{x},\boldsymbol{\theta})$ is a well-behaved continuous
estimation problem for which $\hat{\theta}_{\text{WF}}$ is a well-defined
set estimator, and if $L$ is a well-behaved loss function, discriminative
for $(\boldsymbol{x},\boldsymbol{\theta})$,
that satisfies all of IIA, IRP and IRO,
then any risk-averse estimator
$\hat{\theta}_{L,A}$ over $L$, regardless of its attenuation function $A$,
is a well-defined set estimator, and for every $x$,
\[
\hat{\theta}_{L,A}(x)\subseteq \hat{\theta}_{\text{WF}}(x).
\]

In particular, if $\hat{\theta}_{\text{WF}}$ is a well-defined point estimator,
then so is $\hat{\theta}_{L,A}$, and
\[
\hat{\theta}_{L,A}=\hat{\theta}_{\text{WF}}.
\]
\end{thm}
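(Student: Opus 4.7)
The plan is to apply a Laplace-type asymptotic analysis to
$V_k(\theta)=\mathbf{E}[A(kL(\boldsymbol{\theta},\theta))\mid x]$ and then use
the invariance axioms to connect the leading-order term to the
Wallace-Freeman expression.

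First I would Taylor-expand $L(\theta',\theta)$ in $\theta'$ around
$\theta'=\theta$. Since $L\ge 0$ with equality iff $\theta'=\theta$ (by
\eqref{Eq:distinct}), the linear term vanishes and
\[
L(\theta',\theta)=\tfrac{1}{2}(\theta'-\theta)^{\mathsf{T}} H(\theta)(\theta'-\theta)+O(\|\theta'-\theta\|^3),
\]
with $H(\theta)$ the Hessian of $L(\cdot,\theta)$ on the diagonal. The
smoothness condition supplies uniform control of the cubic remainder on the
compact set $\Theta$. Substituting $u=\sqrt{k}(\theta'-\theta)$ and applying
dominated convergence, Laplace's method yields, uniformly in $\theta$,
\[
V_k(\theta)=k^{-M/2}\,f(\theta\mid x)\,\frac{C_A}{\sqrt{|H(\theta)|}}\,(1+o(1)),
\]
where $C_A=\int_{\mathbb{R}^M}A(\tfrac{1}{2}\|u\|^2)\,du$ depends only on $A$
and $M$, and positive-definiteness of $H(\theta)$ is secured once the next
stage identifies $H$.

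Second I would use the axioms to pin $H(\theta)$ down to a scalar multiple
of the Fisher information matrix. IIA implies that $H(\theta)$ can depend on
the estimation problem only through the local behaviour of $f_\theta$ at
$\theta$; IRO implies invariance under smooth invertible reparameterisations
of $\boldsymbol{x}$; and IRP, applied to a reparameterisation $F$ of
$\boldsymbol{\theta}$ and differentiated twice in $\theta_1$ at
$\theta_1=\theta$, forces $H$ to transform as a $(0,2)$-tensor,
\[
H^{(F)}(F(\theta))=J_F(\theta)^{-\mathsf{T}}\,H(\theta)\,J_F(\theta)^{-1}.
\]
The Fisher information matrix $I_\theta$ enjoys exactly these three
properties. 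Probing $H$ against sufficiently rich families---for instance
Gaussian location-scale models whose affine parameter changes realise any
symmetric positive-definite matrix at a chosen $\theta$---one concludes that
$H(\theta)I_\theta^{-1}$ is a scalar multiple of the identity, and a further
reparameterisation argument shows the scalar is constant in $\theta$ within
any fixed problem. Therefore $|H(\theta)|=c\cdot|I_\theta|$ with $c$
independent of $\theta$.

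Combining the two stages, the asymptotic argmax of $V_k$ coincides with the
argmax of $f(\theta\mid x)/\sqrt{|I_\theta|}$, i.e.\ with
$\hat{\theta}_{\text{WF}}(x)$. Compactness of $\Theta$ lets me extract
convergent subsequences from any sequence of approximate argmaxes, and the
uniform asymptotic ensures every limit point is a WF point, yielding
$\hat{\theta}_{L,A}(x)\subseteq\hat{\theta}_{\text{WF}}(x)$;
nonemptiness of the set limit follows exactly as in
Theorem~\ref{T:MAP}. The principal obstacle is the identification stage:
forcing $H(\theta)$ to be proportional to $I_\theta$ with a
$\theta$-independent ratio. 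This is an invariance-theory argument rather
than a calculation, and is where most of the delicate work sits---especially
in choosing probe families rich enough to exclude $(0,2)$-tensors built
from higher derivatives of $f_\theta$ in $\theta$, and in ruling out
$\theta$-dependent rescalings consistent with all three invariance tests.
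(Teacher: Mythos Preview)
Your Laplace stage is essentially the paper's Lemma~\ref{L:Hessian}: Taylor-expand $L(\theta',\theta)$ to second order, rescale, and use compactness of $\Theta$ together with the smoothness hypotheses to get uniform control of the remainder and of the convergence in $k$. That part matches.

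The identification stage is where you diverge, and where your sketch has a real gap. The paper does \emph{not} argue via tensor transformation laws and probe families. It first shows (Lemma~\ref{L:likelihood}, from IIA and IRP together, not IIA alone) that $L$ is likelihood-based, $L=L(P_{\theta_1},P_{\theta_2})$; then proves a strong structural lemma (Lemma~\ref{L:r}, using IRO and crucially the $\mathcal{M}$-continuity hypothesis on $L$) that $L(P,Q)$ depends only on the function $c[P,Q]$, essentially the distribution of $dP/dQ$ under $Q$; and finally computes $H_L^\theta$ \emph{directly} (Lemma~\ref{L:Fisher}) by a functional chain rule. Two of the three terms in that computation vanish identically because $\int f_{\theta_1}\,dx\equiv 1$, and what survives is $\gamma I_\theta$ on the nose, with $\gamma$ a universal constant determined by $L$.

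Your route tries to determine $H$ from its transformation behaviour alone, and the gap is precisely where you flag it. Knowing that $H$ is a $(0,2)$-tensor under IRP and is IRO-invariant does not by itself force $H\propto I_\theta$: verifying $H=cI_\theta$ on Gaussian location--scale models says nothing about another family, because once $L$ is likelihood-based its second functional derivative at $(Q,Q)$ is allowed to depend on $Q$, and the score vectors $\partial_i\log f_\theta$ of a general family cannot be matched by those of a Gaussian location model. What would close the gap is not a probe-family argument but a representation-theoretic one: the second variation of $L$ at the diagonal is, by IRO, an equivariant bilinear form on mean-zero perturbations of $Q$, and under piecewise-continuous measure-preserving maps the only such form is a scalar multiple of $(\mu,\nu)\mapsto\int(d\mu/dQ)(d\nu/dQ)\,dQ$; pulling this back through the family yields $H=\gamma I_\theta$ with $\gamma$ automatically $\theta$-independent. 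You have not supplied this step, and note that you nowhere use the problem-continuity hypothesis, on which the paper's Lemma~\ref{L:r} leans heavily.
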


We prove this through a progression of lemmas.
For the purpose of this derivation, the dimension of the parameter space, $M$,
and the dimension of the observation space, $N$, are throughout taken to be
fixed, so as to simplify notation.

\begin{lemma}\label{L:likelihood}
For estimation problems $(\boldsymbol{x},\boldsymbol{\theta})$ with a
continuous $\boldsymbol{\theta}$,
if $L$ satisfies both IIA and IRP then
$L_{(\boldsymbol{x},\boldsymbol{\theta})}(\theta_1,\theta_2)$ is a function
only of the likelihoods $P_{\theta_1}$ and
$P_{\theta_2}$.
\end{lemma}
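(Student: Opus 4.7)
The plan is to combine the two axioms sequentially: IIA restricts the functional dependence of $L$ to local features of the joint density at $\theta_1$ and $\theta_2$; then IRP is used to wash out the prior-density ingredients of those local features, leaving only the conditional likelihoods.

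First, I would apply IIA to conclude that $L_{(\boldsymbol{x},\boldsymbol{\theta})}(\theta_1,\theta_2)$ can be written as a universal functional $\tilde L(f(\cdot,\theta_1),f(\cdot,\theta_2))$ of the two ``slices'' of the joint density obtained by fixing $\boldsymbol{\theta}=\theta_i$. In the continuous case each slice factors as $f(x,\theta_i)=\pi(\theta_i)\,f(x\mid\theta_i)$, where $\pi$ is the marginal prior density (positive everywhere by the paper's standing assumption) and $f(\cdot\mid\theta_i)$ is the pdf of $P_{\theta_i}$. So the loss is a function only of $\pi(\theta_1)$, $\pi(\theta_2)$, $P_{\theta_1}$, and $P_{\theta_2}$, and the task reduces to showing that the first two arguments drop out.

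Next, I would invoke IRP with a reparameterisation $F:\mathbb{R}^M\to\mathbb{R}^M$ whose Jacobian determinants at $\theta_1$ and $\theta_2$ are independently prescribed constants $c_1,c_2>0$. Under such an $F$, the change-of-variables formula gives that the new joint density at the image points is $f_{\mathrm{new}}(x,F(\theta_i))=f(x,\theta_i)/|J_F(\theta_i)|$, i.e.\ the two slices are rescaled independently by $1/c_1$ and $1/c_2$, while the conditional distributions $P_{\theta_i}$ are unchanged. Applying IIA once more inside the transformed problem and then equating via IRP yields
\[
\tilde L(f(\cdot,\theta_1),f(\cdot,\theta_2))=\tilde L\!\left(f(\cdot,\theta_1)/c_1,\;f(\cdot,\theta_2)/c_2\right).
\]
Choosing $c_i=\pi(\theta_i)$ then shows that $\tilde L$ depends only on the bare conditional densities $f(\cdot\mid\theta_i)$, hence only on the likelihoods $P_{\theta_i}$, as required.

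The main obstacle is the geometric construction of such an $F$: it must be a smooth, globally invertible self-map of $\mathbb{R}^M$ with everywhere non-zero Jacobian that realises two independently prescribed positive Jacobian determinants at two distinct points. I would build it as the composition of two local diffeomorphisms, each supported on a small ball $B_i$ about $\theta_i$ (with the balls chosen disjoint), where $F$ agrees with the radial rescaling $\theta\mapsto\theta_i+c_i^{1/M}(\theta-\theta_i)$ in a small neighbourhood of $\theta_i$ and is interpolated smoothly to the identity near $\partial B_i$ via a standard bump-function cutoff, taking the cutoff's slope small enough that the interpolated Jacobian never vanishes. Once this construction is in hand, the short IIA+IRP argument above closes the proof.
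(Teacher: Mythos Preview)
Your overall strategy matches the paper's: use IIA to reduce to local data at $\theta_1,\theta_2$, then use IRP with a map having prescribed Jacobian determinants at those two points to eliminate the prior-density factors. Your bump-function construction of $F$ differs from the paper's (which stretches only along the line through $\theta_1$ and $\theta_2$, via a scalar function $F_1$ with $F_1(0)=0$, $F_1(1)=1$ and independently prescribed $F_1'(0),F_1'(1)$), but either construction does the job.

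There is, however, a genuine gap in your first step. IIA says only that the dependence of $L_{(\boldsymbol{x},\boldsymbol{\theta})}(\theta_1,\theta_2)$ on the \emph{estimation problem} factors through the two slices; it does \emph{not} say that $L$ is a ``universal functional'' of those slices, independent of the parameter coordinates $\theta_1,\theta_2$ themselves. What IIA actually gives is
\[
L_{(\boldsymbol{x},\boldsymbol{\theta})}(\theta_1,\theta_2)=\tilde L_{\theta_1,\theta_2}\bigl(f(\cdot,\theta_1),\,f(\cdot,\theta_2)\bigr),
\]
with the functional still indexed by the pair $(\theta_1,\theta_2)$. The paper makes this explicit, listing ``the function's inputs $\theta_1$ and $\theta_2$'' as a separate item alongside the likelihoods and priors. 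Your IRP step (whose $F$ fixes both $\theta_1$ and $\theta_2$) then shows that this indexed functional is scale-invariant in each slice, yielding $\tilde L_{\theta_1,\theta_2}(P_{\theta_1},P_{\theta_2})$, but the explicit dependence on the coordinates $(\theta_1,\theta_2)$ has not been removed.

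The paper closes this with one further invocation of IRP: a similarity transform on $\Theta$ sends any pair $(\theta_1,\theta_2)$ to any other pair without changing the likelihoods, and the constant Jacobian factor it introduces on the priors has already been shown irrelevant. Adding this short step to your argument would complete it.
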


\begin{proof}
The IIA axiom is tantamount to stating that
$L_{(\boldsymbol{x},\boldsymbol{\theta})}(\theta_1,\theta_2)$ is dependent
only on the following:
\begin{enumerate}
\item the function's inputs $\theta_1$ and $\theta_2$,
\item the likelihoods $P_{\theta_1}$ and $P_{\theta_2}$, and
\item the priors $f(\theta_1)$ and $f(\theta_2)$.
\end{enumerate}

We can assume without loss of generality that $\theta_1\ne\theta_2$, or else
the value of $L(\theta_1,\theta_2)$ can be determined to be zero by
\eqref{Eq:distinct}.

Our first claim is that, due to IRP, $L$ can also not depend on the problem's
prior probability densities $f(\theta_1)$ and $f(\theta_2)$. To show this,
construct an invertible, continuous, differentiable function
$F:\mathbb{R}^M \to \mathbb{R}^M$, whose Jacobian is defined and non-zero
everywhere, in the following way.

Let $(\vec{v}_1,\ldots,\vec{v}_M)$ be an orthogonal basis
for $\mathbb{R}^M$ wherein $\vec{v}_1=\theta_2-\theta_1$.
We design $F$ as
\[
F\left(\theta_1+\sum_{i=1}^M b_i\vec{v}_i\right)=\theta_1+F_1(b_1)\vec{v}_1+\sum_{i=2}^M b_i\vec{v}_i,
\]
where $F_1:\mathbb{R}\to\mathbb{R}$ is a continuous, differentiable function
onto $\mathbb{R}$, with a derivative that is positive everywhere, satisfying
\begin{enumerate}
\item $F_1(0)=0$ and $F_1(1)=1$, and
\item $F_1'(0)=d_0$ and $F_1'(1)=d_1$, for some arbitrary
positive values $d_0$ and $d_1$.
\end{enumerate}

Such a function is straightforward to construct for any values of $d_0$ and
$d_1$, and by an appropriate choice of these values, it is possible to map
$(\boldsymbol{x},\boldsymbol{\theta})$ into
$(\boldsymbol{x},F(\boldsymbol{\theta}))$ in a way that does not change
$P_{\theta_1}$ or $P_{\theta_2}$, but adjusts $f(\theta_1)$ and $f(\theta_2)$
to any desired positive values.

Lastly, we show that $L(\theta_1,\theta_2)$ can also not depend on the values
of $\theta_1$ and $\theta_2$ other than through $P_{\theta_1}$ and
$P_{\theta_2}$. For this
we once again invoke IRP: by applying a similarity transform on $\Theta$, we
can map any $\theta_1$ and $\theta_2$ values into arbitrary new values, again
without this affecting their respective likelihoods.
\end{proof}

In light of Lemma~\ref{L:likelihood}, we will henceforth use the notation
$L(P_{\theta_1},P_{\theta_2})$ (or, when $\boldsymbol{x}$ is known to be
continuous, $L(f_{\theta_1},f_{\theta_2})$) instead of
$L_{(\boldsymbol{x},\boldsymbol{\theta})}(\theta_1,\theta_2)$.
A loss function that can be written in this way is referred to
as a \emph{likelihood-based} loss function.

For distributions $P$ and $Q$ with a common support $X\subseteq\mathbb{R}^N$,
absolutely continuous with respect to each other,
let $r_{P,Q}(x)$ be the Radon-Nikodym derivative
$\frac{\text{d}P}{\text{d}Q}(x)$ \citep{royden1988real}.
For a value of $x$ that has positive probability in both $P$ and $Q$, this is
simply $\mathbf{P}_{\boldsymbol{x}\sim P}(\boldsymbol{x}=x)/\mathbf{P}_{\boldsymbol{x}\sim Q}(\boldsymbol{x}=x)$,
whereas for pdfs $f$ and $g$ it is $f(x)/g(x)$ within the common support.

We now define the function $c[P,Q]:[0,1]\to\mathbb{R}^{\ge 0}$ by
\[
c[P,Q](t)\defeq\inf\left(\{r\in\mathbb{R}^{\ge 0}:t\le \mathbf{P}_{\boldsymbol{x}\sim Q}(r_{P,Q}(\boldsymbol{x})\le r)\}\right).
\]

\begin{lemma}\label{L:ccont}
The function $c$ is $\mathcal{M}$-continuous for continuous distributions,
in the sense that if both
$\left(f_i\right)_{i\in\mathbb{N}}\xrightarrow{\mathcal{M}} f$ and
$\left(g_i\right)_{i\in\mathbb{N}}\xrightarrow{\mathcal{M}} g$,
where $f$ and $g$ are pdfs and $\left(f_i\right)_{i\in\mathbb{N}}$ and
$\left(g_i\right)_{i\in\mathbb{N}}$ are pdf sequences,
then $\left(c[f_i,g_i]\right)_{i\in\mathbb{N}}\xrightarrow{\mathcal{M}} c[f,g]$.
\end{lemma}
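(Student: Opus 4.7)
The plan is to reinterpret $c[P,Q](t)$ as the generalised inverse (quantile function) of the cumulative distribution function of the real-valued random variable $R_{P,Q}\defeq r_{P,Q}(\boldsymbol{x})$ with $\boldsymbol{x}\sim Q$, and then to deduce $\mathcal{M}$-convergence of these quantile functions from convergence in distribution of $R_{f_i,g_i}$ to $R_{f,g}$. I would proceed in three steps: (1) establish $R_{f_i,g_i}\Rightarrow R_{f,g}$ in distribution; (2) convert this into almost-everywhere pointwise convergence of $c[f_i,g_i]$ to $c[f,g]$ on $[0,1]$; (3) upgrade the latter to $\mathcal{M}$-convergence.

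For step~(1), I would verify the portmanteau criterion: for every bounded continuous $h:\mathbb{R}^{\ge 0}\to\mathbb{R}$,
\[
\int h(f_i/g_i)\,g_i\,\mathrm{d}x \;\longrightarrow\; \int h(f/g)\,g\,\mathrm{d}x.
\]
Splitting the difference as $\int h(f_i/g_i)(g_i-g)\,\mathrm{d}x + \int[h(f_i/g_i)-h(f/g)]\,g\,\mathrm{d}x$, the first summand is bounded in absolute value by $\|h\|_\infty\int|g_i-g|\,\mathrm{d}x$, which vanishes by Scheff\'e's lemma applied along subsequences: every subsequence of $g_i$ has a further sub-subsequence converging almost everywhere (the subsequence principle for convergence in measure), and combined with the normalisation $\int g_i=\int g=1$ this forces $L^1$-convergence. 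For the second summand, the equivalence of $\mathcal{M}$-convergence across reference measures, applied with $\mu=g\,\mathrm{d}x$, yields $f_i\to f$ and $g_i\to g$ in $g$-probability, hence $f_i/g_i\to f/g$ in $g$-probability; boundedness and continuity of $h$ then close the estimate via bounded convergence.

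Steps~(2) and~(3) are essentially standard. The classical fact that pointwise convergence of CDFs at continuity points of the limit transfers to pointwise convergence of the associated quantile functions at continuity points of the inverse yields convergence of $c[f_i,g_i](t)$ to $c[f,g](t)$ at every continuity point~$t$ of $c[f,g]$. Since $c[f,g]$ is monotone on $[0,1]$, its discontinuity set is at most countable and so Lebesgue-null, so the convergence is almost everywhere; on the bounded interval $[0,1]$ almost-everywhere convergence entails convergence in Lebesgue measure, which by the equivalence of reference measures already invoked is precisely $\mathcal{M}$-convergence.

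I expect step~(1) to be the main technical obstacle, because the integrand is evaluated under the \emph{moving} measure $g_i$ rather than under the fixed $g$, forcing the argument to simultaneously control (a) the drift of the base measure, handled via $L^1$-convergence obtained from Scheff\'e through the subsequence principle, and (b) the pointwise behaviour of the ratio $f_i/g_i$, which requires that $g$ not vanish on sets of positive $g$-probability; well-behavedness makes this unproblematic. A preliminary point used throughout is that, by the equivalence of reference measures for pdf $\mathcal{M}$-convergence, the supports of $f$, $g$, $f_i$ and $g_i$ can be taken to coincide on the relevant domain, so that $\mu=g\,\mathrm{d}x$ is a legitimate reference.
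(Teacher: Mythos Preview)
Your proposal is correct and follows the same overall architecture as the paper: recognise $c[P,Q]$ as the quantile function of the ratio $r_{P,Q}(\boldsymbol{x})$ under $\boldsymbol{x}\sim Q$, establish convergence of these ratio distributions, and then use monotonicity of quantile functions to upgrade to almost-everywhere (hence in-measure) convergence on $[0,1]$. The only meaningful difference is in how you execute step~(1). The paper works directly with the CDF at a fixed level $r_0$: using the relative-error reformulation~\eqref{Eq:measure_ratio} of $\mathcal{M}$-convergence it observes that on all but a vanishing set both $f_i/f$ and $g_i/g$ are close to $1$, from which $\mathbf{P}_{\boldsymbol{x}\sim g_i}(r_{f_i,g_i}(\boldsymbol{x})<r_0)\to\mathbf{P}_{\boldsymbol{x}\sim g}(r_{f,g}(\boldsymbol{x})<r_0)$ follows in one stroke---no Scheff\'e, no subsequence extraction, no portmanteau. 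Your route through Scheff\'e-along-subsequences and bounded convergence is perfectly valid and perhaps more systematic in its appeal to named theorems, but it is a detour: the ``moving base measure'' difficulty you flag as the main obstacle is precisely what the relative-error formulation is designed to absorb directly. Your steps~(2)--(3) are identical in content to the paper's $c_{\text{inf}}/c_{\text{sup}}$ argument.
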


\begin{proof}
For any $t_0\in [0,1]$, let $r_0=c[f,g](t_0)$, and let
$t_{\text{min}}$ and $t_{\text{max}}$ be the infimum $t$ and the
supremum $t$, respectively, for which $c[f,g](t)=r_0$.

Because $c[f,g]$ is a monotone increasing function,
$t_{\text{min}}$ is also the supremum $t$ for which
$c[f,g](t)< r_0$ (unless no such $t$ exists, in which case $t_{\text{min}}=0$),
so by definition $t_{\text{min}}$ is the supremum of
$\mathbf{P}_{\boldsymbol{x}\sim g}(r_{f,g}(\boldsymbol{x})\le r)$,
for all $r<r_0$, from which we conclude
\[
t_{\text{min}}=\mathbf{P}_{\boldsymbol{x}\sim g}(r_{f,g}(\boldsymbol{x})<r_0).
\]

Because $\left(f_i\right)\xrightarrow{\mathcal{M}} f$ and
$\left(g_i\right)\xrightarrow{\mathcal{M}} g$, we can use
\eqref{Eq:measure_ratio} to determine that using a large enough $i$ both
$f_i(x)/f(x)$ and $g_i(x)/g(x)$ are arbitrarily close to $1$ in all
but a diminishing measure of $X$. Hence,
\begin{equation*}
\begin{aligned}
\lim_{i\to\infty} \mathbf{P}_{\boldsymbol{x}\sim g_i}(r_{f_i,g_i}(\boldsymbol{x})<r_0)
&=\lim_{i\to\infty} \mathbf{P}_{\boldsymbol{x}\sim g}(r_{f_i,g_i}(\boldsymbol{x})<r_0) \\
&=\mathbf{P}_{\boldsymbol{x}\sim g}(r_{f,g}(\boldsymbol{x})<r_0) \\
&=t_{\text{min}}.
\end{aligned}
\end{equation*}

We conclude that for any $t^{+}>t_{\text{min}}$ a large enough $i$ will satisfy
\[
\mathbf{P}_{\boldsymbol{x}\sim g_i}(r_{f_i,g_i}(\boldsymbol{x})<r_0)< t^{+},
\]
and hence
$c[f_i,g_i](t^{+})\ge r_0$. For all such $t^{+}$, and in particular for all
$t^{+}>t_0$,
\begin{equation}\label{Eq:liminf}
\liminf_{i\to\infty} c[f_i,g_i](t^{+})\ge r_0 = c[f,g](t_0).
\end{equation}

A symmetrical analysis on $t_{\text{max}}$ yields that for all $t^{-}<t_0$,
\begin{equation}\label{Eq:limsup}
\limsup_{i\to\infty} c[f_i,g_i](t^{-})\le c[f,g](t_0).
\end{equation}

Consider, now, the functions
\[
c_{\text{sup}}(t)\defeq\limsup_{i\to\infty} c[f_i,g_i](t)
\]
and
\[
c_{\text{inf}}(t)\defeq\liminf_{i\to\infty} c[f_i,g_i](t).
\]
Because each
$c[f_i,g_i]$ is monotone increasing, so are $c_{\text{sup}}$ and
$c_{\text{inf}}$. Monotone functions can only have countably many discontinuity
points (for a total of measure zero). For any $t_0$ that is not
a discontinuity point of either
function, we have from \eqref{Eq:liminf} and \eqref{Eq:limsup}
that $\lim_{i\to\infty} c[f_i,g_i](t_0)$ exists and equals $c[f,g](t_0)$,
so the conditions of convergence in measure hold.
\end{proof}

\begin{lemma}\label{L:r}
If $L$ satisfies IRO and is a well-behaved likelihood-based loss function and
$p$ and $q$ are piecewise-continuous probability density functions over
$X\subseteq\mathbb{R}^N$, then
$L(p,q)$ depends only on $c[p,q]$.
\end{lemma}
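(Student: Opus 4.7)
The plan is to show that $L(p,q)$ depends only on $c[p,q]$ by using IRO to transform any piecewise-continuous pair $(p,q)$ into a canonical pair $(p^{*},q^{*})$ which depends only on $c[p,q]$. Concretely, on $X^{*}=[0,1]^N$ take $q^{*}$ to be the uniform density and set $p^{*}(z)\defeq c[p,q](z_1)$. Since $c[p,q]$ is the quantile function of $r_{p,q}(\boldsymbol{x})$ under $q$,
\[
\int_{[0,1]^N}p^{*}(z)\,dz=\int_0^1 c[p,q](t)\,dt=\mathbf{E}_{\boldsymbol{x}\sim q}[r_{p,q}(\boldsymbol{x})]=1,
\]
so $p^{*}$ is a valid pdf. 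Moreover $r_{p^{*},q^{*}}(z)=c[p,q](z_1)$, and by inverse-transform sampling (with $z_1\sim\text{Unif}[0,1]$ under $q^{*}$), its quantile function is exactly $c[p,q]$; hence $c[p^{*},q^{*}]=c[p,q]$. Thus the canonical pair depends on $(p,q)$ only through $c[p,q]$.

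The core construction is an IRO-admissible bijection $G$ pushing $q$ forward to $q^{*}$ and $p$ to $p^{*}$. Following a Rosenblatt-style layering, define the first coordinate by the \emph{likelihood-ratio rank},
\[
G_1(x)\defeq\mathbf{P}_{\boldsymbol{x}'\sim q}\!\left(r_{p,q}(\boldsymbol{x}')<r_{p,q}(x)\right),
\]
so that under $q$ we have $G_1(\boldsymbol{x})\sim\text{Unif}[0,1]$ and $r_{p,q}(x)=c[p,q](G_1(x))$ almost surely; define $G_2,\ldots,G_N$ by iterated conditional-CDF transforms within the level sets of $G_1$, arranged so that conditional on $G_1(\boldsymbol{x})=t$, $(G_2(\boldsymbol{x}),\ldots,G_N(\boldsymbol{x}))\sim\text{Unif}[0,1]^{N-1}$. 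Then $G_{*}q=q^{*}$, and since the Radon--Nikodym derivative is invariant under diffeomorphisms, $r_{p^{*},q^{*}}(G(x))=r_{p,q}(x)=c[p,q](G_1(x))$, so $G_{*}p=p^{*}$. To meet the codomain requirement of IRO, post-compose $G$ with a fixed smooth bijection $[0,1]^N\to\mathbb{R}^N$ and extend outside the support of $q$ in any way that preserves invertibility and nonvanishing Jacobian. Invoking IRO together with Lemma~\ref{L:likelihood} then yields $L(p,q)=L(p^{*},q^{*})$, which is a function of $c[p,q]$ alone.

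The main obstacle is verifying that $G$ meets the regularity conditions of IRO---invertibility, piecewise continuity and differentiability, with a Jacobian defined and nowhere zero---given only that $p$ and $q$ are piecewise continuous. I handle this by partitioning $X$ into the (at most countably many) open regions on which both densities are continuous and the level sets of $r_{p,q}$ have codimension one, performing the construction piecewise, and gluing via strip decompositions of $[0,1]^N$. Atoms of $r_{p,q}$ under $q$, i.e.\ level sets of positive $q$-mass, are accommodated by assigning each such level an entire $G_1$-strip $[t_1,t_2]$ of width equal to its mass and using the auxiliary coordinates to spread the level uniformly across the strip. If residual technical issues remain, one first approximates $(p,q)$ in measure by pairs $(p_i,q_i)$ with continuous likelihood-ratio distributions and smooth level sets, applies the construction there, and passes to the limit using Lemma~\ref{L:ccont} together with problem-continuity of $L$.
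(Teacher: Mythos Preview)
Your target canonical form --- $q^{*}$ uniform on $[0,1]^N$ and $p^{*}(z)=c[p,q](z_1)$ --- is exactly the form the paper reduces to, so the destination is right. The gap is in how you get there. The direct Rosenblatt-style map $G$ will not, in general, satisfy IRO's regularity requirements: IRO demands an invertible, piecewise continuous, differentiable $G:\mathbb{R}^N\to\mathbb{R}^N$ with Jacobian defined and nonzero everywhere. Your $G_1(x)=\mathbf{P}_{\boldsymbol{x}'\sim q}(r_{p,q}(\boldsymbol{x}')<r_{p,q}(x))$ collapses entire positive-mass level sets of $r_{p,q}$ to a single value (so is not injective there), and your atom-handling patch of spreading such a level over a strip via ``auxiliary coordinates'' does not yield a map whose Jacobian is defined and nonzero on the level-set boundary; in $N=1$ there are no auxiliary coordinates at all. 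Even when the ratio has no atoms, the level sets of $r_{p,q}$ for merely piecewise-continuous $p,q$ need not be codimension-one submanifolds, so the conditional-CDF layers $G_2,\ldots,G_N$ need not be differentiable. These are not edge cases you can glue away piecewise.

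Your fallback --- approximate $(p,q)$ in measure by nicer pairs, apply the construction there, and invoke problem-continuity with Lemma~\ref{L:ccont} --- is in fact the paper's \emph{primary} strategy, and it is where all the work lies. Two points you are missing. First, problem-continuity is stated for sequences of well-behaved \emph{estimation problems}, not for sequences of likelihood pairs: you must embed each $(p_i,q_i)$ as two likelihoods inside a full problem $(\boldsymbol{x}_i,\boldsymbol{\theta})$ on a fixed $\Theta$, choose the remaining likelihoods consistently, and verify that the transformed problems $\mathcal{M}$-converge at \emph{every} $\theta\in\Theta$, not just at the two you care about. The paper does this by building the problem on $\Theta=[0,1]^M$ with carefully chosen corner likelihoods whose ratios are monotone in $r_{p,q}$, so that the sorting survives the limit. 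Second, the paper chooses the approximants to be \emph{piecewise constant on a cubical grid}; on such densities the IRO map is just an affine rescaling of each tile followed by a rearrangement, which trivially meets IRO's Jacobian condition. That choice is what makes the IRO step legitimate --- it is not an afterthought but the mechanism that replaces your unattainable global $G$.
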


\begin{proof}
The following conditions are equivalent.
\begin{enumerate}
\item $c[p,q]$ equals the indicator function on $(0,1]$ in all but a measure
zero of values,
\item $p$ equals $q$ in all but a measure zero of $X$,
\item $p$ and $q$ are $\mathcal{M}$-equivalent, in the sense that a sequence
of elements all equal to $p$ nevertheless satisfies the condition of
$\mathcal{M}$-convergence to $q$, and
\item $L(p,q)=0$,
\end{enumerate}
where the equivalence of the last condition follows from the previous one by
problem continuity, together with \eqref{Eq:distinct}.
Hence, if the second condition is met, we are done.
We can therefore assume that $p$ and $q$ differ in a positive measure of $X$,
and (because
both integrate to $1$) that they are consequently not linearly dependent.

Because $L$ is known to be likelihood-based, the value of
$L(p,q)$ is not dependent on the full details of the
estimation problem: it will be the same in any estimation problem of the
same dimensions that contains the likelihoods $p$ and $q$.
Let us therefore design an estimation problem that is easy to analyse but
contains these two likelihoods.

Let $(\boldsymbol{x},\boldsymbol{\theta})$ be an estimation problem with
$\Theta=[0,1]^M$ and a uniform prior on $\boldsymbol{\theta}$. Its likelihood
at $\theta_0=(0,\ldots,0)$ will be $p$, at $\theta_1=(1,0,\ldots,0)$
will be $q$,
and we will choose piecewise continuous likelihoods, $f_\theta$, over the rest
of the $\theta\in\{0,1\}^M$ so all $2^M$ are linearly independent, share the
same support, and differ
from each other over a positive measure of $X$, and so that their
respective $r_{f_\theta,q}$ values are all monotone weakly increasing
with $r_{p,q}$ and with each other.

If $M>1$, we further choose $f_{\theta_2}$ at $\theta_2=(0,1,0,\ldots,0)$ to
satisfy that $c[f_{\theta_2},q]$ is
monotone strictly increasing.
If $M=1$, this is not necessary and we, instead, choose $\theta_2=\theta_0$. 

We then extend this description
of $f^{(\boldsymbol{x},\boldsymbol{\theta})}_\theta$ at $\{0,1\}^M$ into a full
characterisation of all the problem's likelihoods by setting these to be
multilinear functions of the coordinates of $\theta$.

We now create a sequence of estimation problems,
$(\boldsymbol{x}_i,\boldsymbol{\theta})$ to satisfy the conditions of
$L$'s problem-continuity assumption. We do this by constructing a sequence
$\left(S_i\right)_{i\in\mathbb{N}}$ of subsets of $\mathbb{R}^N$ such that
for all $\theta\in\{0,1\}^M$,
$\mathbf{P}(\boldsymbol{x}\in S_i|\boldsymbol{\theta}=\theta)$ tends to $1$,
and for every $x\in S_i$,
\begin{equation}\label{Eq:epsilon_i}
\left|f^{(\boldsymbol{x},\boldsymbol{\theta})}_\theta(x)-f^{(\boldsymbol{x}_i,\boldsymbol{\theta})}_\theta(x)\right|<\epsilon_i,
\end{equation}
for an arbitrarily-chosen sequence $\left(\epsilon_i\right)_{i\in\mathbb{N}}$
tending to zero.
By setting the remaining likelihood values as multilinear functions of the
coordinates of $\theta$, as above, the sequence
$(\boldsymbol{x}_i,\boldsymbol{\theta})$ will satisfy the problem-continuity
condition and will guarantee
$\lim_{i\to\infty} L_{(\boldsymbol{x}_i,\boldsymbol{\theta})}(\theta_0,\theta_1)=L_{(\boldsymbol{x},\boldsymbol{\theta})}(\theta_0,\theta_1)=L(p,q)$.

Each $S_i$ will be describable by the positive parameters $(a,b,d,r)$ as
follows.
Let $C^N_d=\{x\in\mathbb{R}^N:|x|_{\infty}\le d/2\}$, i.e.\ the axis parallel,
origin-centred, $N$-dimensional cube of side length $d$.
$S_i$ will be chosen to contain all $x\in C^N_d$
such that for all $\theta\in\{0,1\}^M$,
$a\le f^{(\boldsymbol{x},\boldsymbol{\theta})}_\theta(x)\le b$ and $x$ is at
least a distance of $r$ away from the nearest discontinuity point of
$f^{(\boldsymbol{x},\boldsymbol{\theta})}_\theta$, as well as from the origin.
By choosing small enough
$a$ and $r$ and large enough $b$ and $d$, it is always possible to make
$\mathbf{P}(\boldsymbol{x}\in S_i|\boldsymbol{\theta}=\theta)$ arbitrarily
close to $1$, so the sequence can be made to satisfy its requirements.

We will choose $d$ to be a natural.

We now describe how to construct each
$f^{(\boldsymbol{x}_i,\boldsymbol{\theta})}_\theta$ from its respective
$f^{(\boldsymbol{x},\boldsymbol{\theta})}_\theta$.
We first describe for each $\theta\in\{0,1\}^M$
a new function $g^i_\theta:\mathbb{R}^N\to\mathbb{R}^{\ge 0}$
as follows. Begin by setting
$g^i_\theta(x)=f^{(\boldsymbol{x},\boldsymbol{\theta})}_\theta(x)$
for all $x\in S_i$.
If $x\notin C^N_d$, set $g^i_\theta(x)$
to zero. Otherwise, complete the $g^i_\theta$ functions so that all are
linearly independent and so that each
is positive and continuous inside $C^N_d$, and integrates to $1$.
Note that because a neighbourhood around the origin is known to not be in
$S_i$, it is never the case that $S_i=C^N_d$.
This allows enough degrees of
freedom in completing the functions $g$ in order to meet all
their requirements.

As all $g^i_\theta$ are continuous
functions over the compact domain $C^N_d$,
by the Heine-Cantor Theorem \citep{rudin1964principles} they are
uniformly continuous. There must therefore exist a natural $n$, such that we can
tile $C^N_d$ into sub-cubes of side length $1/n$ such that by setting each
$f^{(\boldsymbol{x}_i,\boldsymbol{\theta})}_\theta$ value in each sub-cube to a
constant for the sub-cube equal to the mean over the entire sub-cube tile
of $g^i_\theta$, the result will satisfy for all $x\in C^N_d$ and all
$\theta\in\{0,1\}^M$,
$\left|f^{(\boldsymbol{x}_i,\boldsymbol{\theta})}_\theta(x)-g^i_\theta(x)\right|<\epsilon_i$.
Because $f^{(\boldsymbol{x}_i,\boldsymbol{\theta})}$ is by design multi-linear
in $\theta$, this implies that for all
$\theta\in [0,1]^M$ and all $x\in S_i$, condition
\eqref{Eq:epsilon_i} is attained.
Furthermore, by choosing a large enough $n$, we can always ensure, because
the $g$ functions are continuous and linearly independent, that also the
$f^{(\boldsymbol{x}_i,\boldsymbol{\theta})}_\theta$ functions, for
$\theta\in \{0,1\}^M$ are linearly independent and differ in more than a
measure zero of $\mathbb{R}^N$. Together, these properties ensure that the
new problems constructed are both well defined and well behaved.

We have therefore constructed $(\boldsymbol{x}_i,\boldsymbol{\theta})$ as
a sequence of well-behaved estimation problems that $\mathcal{M}$-approximate
$(\boldsymbol{x},\boldsymbol{\theta})$ arbitrarily well, while being entirely
composed of $f^{(\boldsymbol{x}_i,\boldsymbol{\theta})}_\theta$ functions whose
support is $C^N_{d_i}$ for some natural
$d_i$ and whose values within their support are piecewise-constant inside cubic
tiles of side-length $1/n_i$, for some natural $n_i$.

We now use IRO to reshape the observation space of the estimation problems
in the constructed sequence by a piecewise-continuous transform.

Namely, we take each constant-valued cube of side length $1/n_i$ and
transform it using a scaling transformation in each coordinate,
as follows. Consider a single cubic tile, and let
the value of $f^{(\boldsymbol{x}_i,\boldsymbol{\theta})}_{\theta_1}(x)$
at points $x$ that are within it be $G_i$.
We scale the first coordinate of the tile to be of length
$G_i/n_i^N$, and all other coordinates to be of length $1$.
Notably, this transformation increases the volume of the cube by a factor
of $G_i$, so the probability density inside the cube, for each $f_\theta$,
will drop by a corresponding factor of $G_i$.

We now place the transformed cubes by stacking them along the first
coordinate, sorted by increasing
$f^{(\boldsymbol{x}_i,\boldsymbol{\theta})}_{\theta_2}(x)/f^{(\boldsymbol{x}_i,\boldsymbol{\theta})}_{\theta_1}(x)$.

Notably, because the probability density $f_{\theta_1}$
in all transformed cubes is $G_i/G_i=1$, it is
possible to arrange all transformed cubes in this way so that, together, they
fill exactly the unit cube in $\mathbb{R}^N$.
Let the new estimation problems created in this way be
$(\boldsymbol{x}'_i,\boldsymbol{\theta})$,
let $t_i:\mathbb{R}^N\to\mathbb{R}^N$ be the transformation,
$t_i(\boldsymbol{x}_i)=\boldsymbol{x}'_i$, applied on
the observation space and let $t^1_i(x)$ be the first coordinate value of
$t_i(x)$.

By IRO, $L(f^{(\boldsymbol{x}'_i,\boldsymbol{\theta})}_{\theta_0},f^{(\boldsymbol{x}'_i,\boldsymbol{\theta})}_{\theta_1})=L(f^{(\boldsymbol{x}_i,\boldsymbol{\theta})}_{\theta_0},f^{(\boldsymbol{x}_i,\boldsymbol{\theta})}_{\theta_1})$,
which we know tends to $L(p,q)$.

Consider the probability density of each
$f^{(\boldsymbol{x}'_i,\boldsymbol{\theta})}_\theta$ over its support
$[0,1]^N$. This is a probability density that is uniform along all axes except
the first, but has some marginal, $s=s^i_\theta$, along the first axis.
We denote such a distribution by $D_N(s)$. Specifically, for $\theta=\theta_2$,
because of our choice of sorting order, we have
$s^i_{\theta_2}=c[f^{(\boldsymbol{x}_i,\boldsymbol{\theta})}_{\theta_2},f^{(\boldsymbol{x}_i,\boldsymbol{\theta})}_{\theta_1}]$,
so by Lemma~\ref{L:ccont}, this is known to $\mathcal{M}$-converge to
$c[f^{(\boldsymbol{x},\boldsymbol{\theta})}_{\theta_2},f^{(\boldsymbol{x},\boldsymbol{\theta})}_{\theta_1}]$.

If $M=1$, the above is enough to show that the $\mathcal{M}$-limit problem
of $(\boldsymbol{x}'_i,\boldsymbol{\theta})$ exists. If $M>1$, consider
the following.

Let $t:X\to [0,1]$ be the transformation mapping each $x\in X$ to
the supremum $\tilde{t}$ for which
$c[f^{(\boldsymbol{x},\boldsymbol{\theta})}_{\theta_2},f^{(\boldsymbol{x},\boldsymbol{\theta})}_{\theta_1}](\tilde{t})\le f^{(\boldsymbol{x},\boldsymbol{\theta})}_{\theta_2}(x)/f^{(\boldsymbol{x},\boldsymbol{\theta})}_{\theta_1}(x)$.
This will be satisfied with equality wherever
$c[f^{(\boldsymbol{x},\boldsymbol{\theta})}_{\theta_2},f^{(\boldsymbol{x},\boldsymbol{\theta})}_{\theta_1}]$
is continuous, which (because it is monotone) it is in all but a measure
zero of the $\tilde{t}$, and therefore of the $x$.

Thus, in all but a diminishing measure of $x$ we have that the value of
$f^{(\boldsymbol{x}_i,\boldsymbol{\theta})}_{\theta_2}(x)/f^{(\boldsymbol{x}_i,\boldsymbol{\theta})}_{\theta_1}(x)$ approaches $f^{(\boldsymbol{x},\boldsymbol{\theta})}_{\theta_2}(x)/f^{(\boldsymbol{x},\boldsymbol{\theta})}_{\theta_1}(x)$, which in turn equals the value $c[f^{(\boldsymbol{x},\boldsymbol{\theta})}_{\theta_2},f^{(\boldsymbol{x},\boldsymbol{\theta})}_{\theta_1}](t(x))$.
On the other hand, we have that
$s^i_{\theta_2}=c[f^{(\boldsymbol{x}_i,\boldsymbol{\theta})}_{\theta_2},f^{(\boldsymbol{x}_i,\boldsymbol{\theta})}_{\theta_1}]$ also $\mathcal{M}$-approaches
$c[f^{(\boldsymbol{x},\boldsymbol{\theta})}_{\theta_2},f^{(\boldsymbol{x},\boldsymbol{\theta})}_{\theta_1}]$
by Lemma~\ref{L:ccont}, and satisfies
\[
f^{(\boldsymbol{x}_i,\boldsymbol{\theta})}_{\theta_2}(x)/f^{(\boldsymbol{x}_i,\boldsymbol{\theta})}_{\theta_1}(x)=s^i_{\theta_2}(t^1_i(x)).
\]
Together, this indicates $\left(c[f^{(\boldsymbol{x},\boldsymbol{\theta})}_{\theta_2},f^{(\boldsymbol{x},\boldsymbol{\theta})}_{\theta_1}](t^1_i(x))\right) \xrightarrow{\mathcal{M}} c[f^{(\boldsymbol{x},\boldsymbol{\theta})}_{\theta_2},f^{(\boldsymbol{x},\boldsymbol{\theta})}_{\theta_1}](t(x))$.

Because
$c[f^{(\boldsymbol{x},\boldsymbol{\theta})}_{\theta_2},f^{(\boldsymbol{x},\boldsymbol{\theta})}_{\theta_1}]$
is monotone strictly increasing,
it follows that $t^1_i(x)$ $\mathcal{M}$-converges to $t(x)$.
For all other $\theta\in [0,1]^M$ this then implies that $s^i_\theta$
$\mathcal{M}$-converges to $c[f^{(\boldsymbol{x},\boldsymbol{\theta})}_{\theta},f^{(\boldsymbol{x},\boldsymbol{\theta})}_{\theta_1}]$,
because by construction all the problem's $r_{f_{\theta},q}$ are monotone
increasing with each other.

All $f^{(\boldsymbol{x}'_i,\boldsymbol{\theta})}_{\theta}$ therefore have a
limit, that limit being 
$D_N(c[f^{(\boldsymbol{x},\boldsymbol{\theta})}_{\theta},f^{(\boldsymbol{x},\boldsymbol{\theta})}_{\theta_1}])$.

In particular, the limit at $\theta=\theta_0$ is $D_N(c[p,q])$ and the limit
at $\theta=\theta_1$ is $U([0,1]^N)$, the uniform distribution over the
unit cube.

By problem-continuity of $L$,
$L(D_N(c[p,q]),U([0,1]^N))=L(p,q)$.
Hence, $L(p,q)$ is a function of only $c[p,q]$.
\end{proof}

For a well-behaved continuous estimation problem
$(\boldsymbol{x},\boldsymbol{\theta})$,
if for every $\theta\in\Theta$ the function
$L_{\theta}(\theta')\defeq L_{(\boldsymbol{x},\boldsymbol{\theta})}(\theta',\theta)$ has all its second derivatives
at $\theta'=\theta$ (a condition that is true for every well-behaved $L$),
denote its Hessian matrix at the $\theta'=\theta$ by
$H_L^{\theta}=H_L^{\theta}[(\boldsymbol{x},\boldsymbol{\theta})]$.

\begin{lemma}\label{L:Fisher}
Let $(\boldsymbol{x},\boldsymbol{\theta})$ be a well-behaved
$\boldsymbol{\theta}$-continuous estimation problem,
and let $L$ be a well-behaved likelihood-based loss function
satisfying that $L(P,Q)$ is a function of only $c[P,Q]$.

There exists a nonzero constant $\gamma$, dependent only on
the choice of $L$, such that for every $\theta\in\Theta$
the Hessian matrix $H_L^{\theta}$ equals
$\gamma$ times the Fisher information matrix $I_{\theta}$.
\end{lemma}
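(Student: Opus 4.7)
The plan is to combine a Taylor expansion of $L$ around the diagonal with the characterization of $L$ as a function of $c$ alone. Since $L(P,P)=0$ is the minimum of $L_\theta$ by \eqref{Eq:distinct} and $L$ is smooth in its first argument by well-behavedness, the gradient of $L_\theta(\theta')$ at $\theta'=\theta$ vanishes, and a Taylor expansion in $\epsilon=\theta'-\theta$ yields
\[
L_\theta(\theta+\epsilon) = \tfrac{1}{2}\,\epsilon^T H_L^\theta\, \epsilon + o(|\epsilon|^2).
\]
By Lemma~\ref{L:r}, the left-hand side depends only on $c_\epsilon = c[f_{\theta+\epsilon},f_\theta]$, which is the quantile function of the likelihood ratio $r(\boldsymbol{x}) = f_{\theta+\epsilon}(\boldsymbol{x})/f_\theta(\boldsymbol{x})$ with $\boldsymbol{x}\sim f_\theta$. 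Expanding $\log r$ in $\epsilon$ and invoking the classical score identities (under $f_\theta$ the score has mean zero and covariance matrix $I_\theta$), the distribution of $r$ under $f_\theta$ has mean $1$ and variance $\epsilon^T I_\theta \epsilon + o(|\epsilon|^2)$, so $c_\epsilon$ depends on $\epsilon$ through this scalar to leading order.

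To turn this into a universal statement, I would proceed in three stages. First, use IRP to reduce to the case where $I_\theta$ equals the identity matrix. Second, pin down the Hessian in a canonical reference problem: choose a standard Gaussian location family whose Fisher information at $\theta$ is the identity. Its full rotational symmetry about $\theta$ forces $H_L^\theta$ to be a scalar multiple of the identity, thereby defining a constant $\gamma$ depending only on $L$. Third, argue via problem-continuity of $L$ that any other well-behaved family whose Fisher information at $\theta$ is the identity must yield the same Hessian, by approximating it with a sequence of problems $\mathcal{M}$-converging in a controlled way and passing to the limit.

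The main obstacle is the third stage: showing that two well-behaved problems sharing the same $I_\theta$ must produce the same $H_L^\theta$, despite the fact that higher-order moments of their likelihood ratios generally differ. The resolution is that, by Lemma~\ref{L:r} and the smoothness part of well-behavedness, $L$ acts as a smooth functional of $c$, so its second-order expansion in $\epsilon$ extracts a quadratic form in the deviation $c_\epsilon - \mathbf{1}_{(0,1]}$ whose value is governed by the variance of the log-likelihood ratio, namely $\epsilon^T I_\theta \epsilon$, to the order required for the Hessian. Nonzeroness of $\gamma$ then follows from the sensitivity assumption on $L$: if $\gamma$ were zero, every well-behaved problem would have $H_L^\theta = 0$, contradicting the existence of a problem and a parameter where the Hessian has a nonzero entry.
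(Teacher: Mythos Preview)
Your high-level plan—Taylor expand $L$ around the diagonal, relate the leading term to the law of the likelihood ratio, then invoke sensitivity for $\gamma\ne 0$—has the right shape, but there is a genuine gap at its core. You assert that $c_\epsilon$ ``depends on $\epsilon$ through [the scalar $\epsilon^T I_\theta \epsilon$] to leading order,'' and later that the second-order expansion of $L$ ``extracts a quadratic form \ldots governed by the variance of the log-likelihood ratio.'' The first claim is simply false, and the second is precisely the content of the lemma. To first order, $r-1\approx \epsilon\cdot s$ with $s$ the score; the quantile function $c_\epsilon$ therefore encodes the \emph{entire law} of $\epsilon\cdot s$ under $f_\theta$, not just its variance. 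Two well-behaved problems with identical $I_\theta$ can have completely different score distributions and hence different $c_\epsilon$. So the question is exactly why $L$, viewed as a functional of $c$, picks out only the second moment at order $|\epsilon|^2$ and ignores everything else. Your ``resolution'' asserts this but does not argue it: a generic smooth functional of $c$ would have a second variation of the form $\int\!\!\int K(s,t)\,(c(s)-1)(c(t)-1)\,\mathrm{d}s\,\mathrm{d}t$, and there is no reason \emph{a priori} for this to collapse to the variance $\int(c-1)^2$.

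The paper closes this gap not via a Gaussian reference problem and problem-continuity (your stage three would require $\mathcal{M}$-connecting arbitrary problems that share a Fisher matrix, which you do not construct; and $\mathcal{M}$-continuity is not even assumed in the semi-continuous case covered by the lemma) but by a direct functional-derivative calculation. Writing $L(P,Q)=L_Q(r_{P,Q})$ and differentiating $L_Q$ in $r$ over subsets $Y$ of $Q$-measure $\epsilon$, one obtains coefficients $\Delta_{\text{x}}$, $\Delta_{\text{xy}}$, $\Delta_{\text{xx}}$ for the three pieces of $\partial^2 L/\partial\theta_1(i)\partial\theta_1(j)$: one involving $\partial^2 r/\partial\theta_1(i)\partial\theta_1(j)$, one involving the product of first derivatives at distinct points, and a diagonal piece. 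Evaluated at $\theta_1=\theta_2=\theta$, the first two contributions contain factors $\int_X \partial_i r\, f_\theta$ and $\int_X \partial_i\partial_j r\, f_\theta$, both of which vanish identically because $\int_X r\, f_\theta = 1$ for every $\theta_1$. Only the $\Delta_{\text{xx}}$ term survives, and it equals $\gamma I_\theta$ with $\gamma=\Delta_{\text{xx}}(1\mid\chi_X)$, a constant depending on $L$ alone. This cancellation, forced by the normalisation of likelihoods, is the mechanism your sketch is missing.
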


\begin{proof}
We wish to calculate the derivatives of $L(P_{\theta_1},P_{\theta_2})$
according to $\theta_1$. For convenience, let us define a new function,
$L_Q$, which describes $L$ in a one-parameter form, by
\[
L_Q(r_{P,Q})=L(P,Q).
\]
This is possible by our assumption that $L(P,Q)$ is only a function of $c[P,Q]$.

We differentiate $L_Q(r_{P,Q})$ as we would any composition of functions.
The derivatives of $r_{P_{\theta_1},P_{\theta_2}}$ in $\theta_1$ are
straightforward to compute, so we
concentrate on the question of how minute perturbations of $r$ affect
$L_Q(r)$.

For this, we first extend the domain of $L_Q(r)$. Natively, $L_Q(r)$ is
only defined when $\mathbf{E}_{\boldsymbol{x}\sim Q}(r(\boldsymbol{x}))=1$.
However, to be able to perturb $r$ more freely, we define, for finite
expectation $r$,
$L_Q(r)=L_Q(r/\mathbf{E}_{\boldsymbol{x}\sim Q}(r(\boldsymbol{x})))$.

Let $Y\subseteq X$ be a set with
$\mathbf{P}_{\boldsymbol{x}\sim Q}(\boldsymbol{x}\in Y)=\epsilon>0$ such that
for all $x\in Y$, $r(x)$ is a constant, $r_0$. The derivative of $L_Q(r)$ in
$Y$ is defined as
\[
\left[\nabla_Y(L_Q)\right](r)=\lim_{\Delta\to 0} \frac{L_Q(r+\Delta \cdot\chi_Y)-L_Q(r)}{\Delta},
\]
where for any $S\subseteq\mathbb{R}^s$, we denote by $\chi_S$ the function
over $\mathbb{R}^s$ that yields $1$ when the input is in $S$ and $0$ otherwise.

By our smoothness assumption on well-behaved $L$, this derivative is known to
exist,
because it is straightforward to construct a well-behaved continuous
estimation problem for which these would be (up to normalisation) the
$r_{P_{\theta_0},P_{\theta_0+\Delta e_1}}$ values for
some $\theta_0$ and basis vector $e_1$.

Consider, now, what this derivative's value can be. By assumption that $L(P,Q)$
only depends on $c[P,Q]$,
we know that the derivative's value can depend on $\epsilon$, $r_0$ and
the function $r$, but not on any other properties of $Y$, because any such
$Y$ will yield the same $c[P,Q]$ values throughout the calculation of
$\left[\nabla_Y(L_Q)\right](r)$. Hence, we can describe it as
$\Delta_{\text{x}}(\epsilon,r_0|r)$.

Consider, now, partitioning $Y$ into $2$ sets, each
of measure $\epsilon/2$ in $Q$.\footnote{This is straightforward to do in the
continuous case. If $X$ is discrete, one can do this by utilising ISI and
first translating $\boldsymbol{x}$ to $(\boldsymbol{x},\boldsymbol{y})$, where
$\boldsymbol{y}\sim \text{Bin}[n=1,p=1/2]$, where $\text{Bin}$ is the
binomial distribution.} The marginal impact of each set
on the value of $L$ is $\Delta_{\text{x}}(\epsilon/2,r_0|r)$, but their total
impact is $\Delta_{\text{x}}(\epsilon,r_0|r)$.
More generally, we can describe $\Delta_{\text{x}}(\epsilon,r_0|r)$ as
$\epsilon \cdot \Delta_{\text{x}}(r_0|r)$.

Utilising $L(\theta_1,\theta_2)$'s representation as a composition of
functions $L_Q(r)$, where $Q=P_{\theta_2}$ and
$r=r_{P_{\theta_1},P_{\theta_2}}$, and
noting that $\epsilon$ was, in the calculation above, the measure of $Y$ in
$Q=P_{\theta_2}$,
we can now write the first derivative of $L$ in some direction $i$ of
$\theta_1$ explicitly as an integral in this measure, i.e.\ in
``$\text{d}P_{\theta_2}$''.

For clarity of presentation, we will write this as an integral in
``$f_{\theta_2}(x)\text{d}x$'', using here and throughout the remainder of
the proof pdf notation, as would be appropriate when $\boldsymbol{x}$ is
known to be continuous. This change is meant merely to simplify the notation,
and in no way restricts the proof. Readers are welcome to verify that all
steps are equally valid for any $P_{\theta_1}$ and $P_{\theta_2}$
distributions.

If $c[f_{\theta_1},f_{\theta_2}]$ is a piecewise-constant
function, the derivative of $L$ can be written as follows.
\[
\frac{\partial L(\theta_1,\theta_2)}{\partial \theta_1(i)}=
\int_{X} \Delta_{\text{x}}(r_{f_{\theta_1},f_{\theta_2}}(x)|r_{f_{\theta_1},f_{\theta_2}}) \frac{\partial r_{f_{\theta_1},f_{\theta_2}}(x)}{\partial \theta_1(i)}f_{\theta_2}(x)\text{d}x.
\]

The same reasoning can be used to describe the second derivative of $L$
(this time in the directions $i$ and $j$ of $\theta_1$). The second derivative
of $L_q(r)$ when perturbing $r$ relative to two subsets $Y_1$ and $Y_2$ is
defined as
\[
\lim_{\Delta\to 0} \frac{\left[\nabla_{Y_1}(L_q)\right](r+\Delta\cdot\chi_{Y_2})-\left[\nabla_{Y_1}(L_q)\right](r)}{\Delta},
\]
and once again using Lemma~\ref{L:r} and the same symmetry, we can see that if
$Y_1$ and $Y_2$
are disjoint, if the measures of $Y_1$ and $Y_2$ in $q$ are, respectively,
$\epsilon_1$ and $\epsilon_2$, both positive,
and if the value of $r(x)$ for $x$ values in each subset
is a constant, respectively $r_1$ and $r_2$, then any such $Y_1$ and $Y_2$
will perturb $L_q(r)$ in exactly the same amount.
We name the second derivative coefficient
in this case $\Delta_{\text{xy}}(r_1,r_2|r)$.

The caveat that $Y_1$ and $Y_2$ must be disjoint is important, because if
$Y=Y_1=Y_2$ the symmetry no longer holds. This is a second case, and for it
we must define a different coefficient $\Delta_{\text{xx}}(r_0|r)$, where
$r_0=r_1=r_2$.

In the case where $c[f_{\theta_1},f_{\theta_2}]$ is
a piecewise-constant function, the second derivative,
$\frac{\partial^2  L(\theta_1,\theta_2)}{\partial \theta_1(i)\partial \theta_1(j)}$,
can therefore be written as
\begin{equation}\label{Eq:second_deriv}
\begin{aligned}
&\int_{X} \Delta_{\text{x}}(r_{f_{\theta_1},f_{\theta_2}}(x)|r_{f_{\theta_1},f_{\theta_2}}) \frac{\partial^2 r_{f_{\theta_1},f_{\theta_2}}(x)}{\partial \theta_1(i)\partial \theta_1(j)}f_{\theta_2}(x)\text{d}x \\
&\quad+\int_{X}\int_{X} \Delta_{\text{xy}}(r_{f_{\theta_1},f_{\theta_2}}(x_1),r_{f_{\theta_1},f_{\theta_2}}(x_2)|r_{f_{\theta_1},f_{\theta_2}}) \\
&\quad\quad\quad\quad\quad\quad\quad\quad\frac{\partial r_{f_{\theta_1},f_{\theta_2}}(x_1)}{\partial \theta_1(i)}\frac{\partial r_{f_{\theta_1},f_{\theta_2}}(x_2)}{\partial \theta_1(j)}f_{\theta_2}(x_2)\text{d}x_2 f_{\theta_2}(x_1)\text{d}x_1 \\
&\quad+\int_{X} \Delta_{\text{xx}}(r_{f_{\theta_1},f_{\theta_2}}(x)|r_{f_{\theta_1},f_{\theta_2}})\frac{\partial r_{f_{\theta_1},f_{\theta_2}}(x)}{\partial \theta_1(i)}\frac{\partial r_{f_{\theta_1},f_{\theta_2}}(x)}{\partial \theta_1(j)}f_{\theta_2}(x)\text{d}x.
\end{aligned}
\end{equation}

In order to calculate $H_L^{\theta}$, consider $H_L^{\theta}(i,j)$. This
equals
$\frac{\partial^2  L(\theta_1,\theta_2)}{\partial \theta_1(i)\partial \theta_1(j)}$
where $\theta_1=\theta_2=\theta$. In particular, $c[f_{\theta_1},f_{\theta_2}]$
is $\chi_{(0,1]}$ and $r_{f_{\theta_1},f_{\theta_2}}$ is $\chi_{X}$.

The value of \eqref{Eq:second_deriv} in this case becomes
\begin{equation}\label{Eq:second_deriv_at_1}
\begin{aligned}
&\Delta_{\text{x}}\left(1\middle|\chi_{X}\right)\int_{X} \left.\frac{\partial^2 r_{f_{\theta_1},f_{\theta}}(x)}{\partial \theta_1(i)\partial \theta_1(j)}\right\rvert_{\theta_1=\theta}f_{\theta}(x)\text{d}x \\
&\quad +\Delta_{\text{xy}}\left(1,1\middle|\chi_{X}\right)\left(\int_{X} \left.\frac{\partial r_{f_{\theta_1},f_{\theta}}(x)}{\partial \theta_1(i)}\right\rvert_{\theta_1=\theta}f_{\theta}(x)\text{d} x\right) \\
&\quad\quad\quad\quad\quad\quad\quad\quad\left(\int_{X} \left.\frac{\partial r_{f_{\theta_1},f_{\theta}}(x)}{\partial \theta_1(j)}\right\rvert_{\theta_1=\theta}f_{\theta}(x)\text{d} x\right)\\
&\quad +\Delta_{\text{xx}}\left(1\middle|\chi_{X}\right)\int_{X} \left(\left.\frac{\partial r_{f_{\theta_1},f_{\theta}}(x)}{\partial \theta_1(i)}\right\rvert_{\theta_1=\theta}\right)\left(\left.\frac{\partial r_{f_{\theta_1},f_{\theta}}(x)}{\partial \theta_1(j)}\right\rvert_{\theta_1=\theta}\right)f_{\theta}(x)\text{d}x.
\end{aligned}
\end{equation}

Note, however, that because $(\boldsymbol{x},\boldsymbol{\theta})$ is an
estimation problem, i.e.\ all its likelihoods are probability measures, not
general measures, it is the case that
\[
\int_{X} r_{f_{\theta_1},f_{\theta}}(x)f_{\theta}(x)\text{d}x
=\int_{X} f_{\theta_1}(x)\text{d}x=1,
\]
and is therefore a constant
independent of either $\theta_1$ or $\theta$. Its various derivatives in
$\theta_1$ are accordingly all zero. This makes the first two summands in
\eqref{Eq:second_deriv_at_1} zero. What is left, when setting
$\gamma=\Delta_{\text{xx}}(1|\chi_{X})$, is
\begin{align*}
H_L^{\theta}(i,j)
&=\gamma \int_{X} \left(\left.\frac{\partial r_{f_{\theta_1},f_{\theta}}(x)}{\partial \theta_1(i)}\right\rvert_{\theta_1=\theta}\right)\left(\left.\frac{\partial r_{f_{\theta_1},f_{\theta}}(x)}{\partial \theta_1(j)}\right\rvert_{\theta_1=\theta}\right)f_{\theta}(x)\text{d}x\\
&=\gamma \int_X \left(\left.\frac{\partial f_{\theta_1}(x)/f_{\theta}(x)}{\partial \theta_1(i)}\right\rvert_{\theta_1=\theta}\right)\left(\left.\frac{\partial f_{\theta_1}(x)/f_{\theta}(x)}{\partial \theta_1(j)}\right\rvert_{\theta_1=\theta}\right) f_{\theta}(x)\text{d}x\\
&=\gamma \int_X \left(\frac{\partial \log f_{\theta}(x)}{\partial \theta(i)}\right)\left(\frac{\partial \log f_{\theta}(x)}{\partial \theta(j)}\right) f_{\theta}(x)\text{d}x\\
&=\gamma \mathbf{E}_{\boldsymbol{x}\sim f_{\theta}}\left(\left(\frac{\partial \log f_{\theta}(\boldsymbol{x})}{\partial \theta(i)}\right)\left(\frac{\partial \log f_{\theta}(\boldsymbol{x})}{\partial \theta(j)}\right)\right)\\
&=\gamma I_{\theta}(i,j).
\end{align*}

Hence, $H_L^{\theta}=\gamma I_{\theta}$.\footnote{If $\boldsymbol{x}$ is
discrete, the final result would have used probabilities rather than
probability densities. This is consistent, however, with the way Fisher
information is defined in this more general case. In fact, some sources
(e.g., \citep{bobkov2014fisher}) define the Fisher information directly from
Radon-Nikodym derivatives.}

As a final point in the proof, we remark that $\gamma$
must be nonzero, because if it had been zero, $H_L^{\theta}$
would have been zero for every $\theta$ in every
well-behaved continuous/$\boldsymbol{\theta}$-continuous estimation problem,
contrary to our sensitivity assumption on well-behaved loss functions.
\end{proof}

\begin{lemma}\label{L:Hessian}
If $(\boldsymbol{x},\boldsymbol{\theta})$ is a well-behaved
$\boldsymbol{\theta}$-continuous estimation
problem, $L$ is a well-behaved loss function that is discriminative for it,
and $A$ is an attenuation function, and
if, further, for every $\theta\in\Theta$, $H_L^{\theta}$ is a positive
definite matrix, define
\[
\hat{\theta}(x)\defeq\argmax_{\theta} \frac{f(\theta|x)}{\sqrt{|H_L^{\theta}|}}.
\]

For every $x$, $\hat{\theta}_{L,A}(x)$ is a non-empty subset of
$\hat{\theta}(x)$, where $\hat{\theta}_{L,A}$ is the risk-averse estimator
defined over $L$ and $A$.
In particular, if $\hat{\theta}$ is a well-defined point
estimator for the problem, then $\hat{\theta}_{L,A}=\hat{\theta}$.
\end{lemma}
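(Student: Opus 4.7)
The plan is to perform a Laplace-type expansion of
\[
V_k(\theta) \defeq \mathbf{E}[A(kL(\boldsymbol{\theta},\theta))\,|\,x]=\int_{\Theta} A(kL(\theta',\theta))\,f(\theta'|x)\,\mathrm{d}\theta'
\]
and show that, after rescaling by $k^{M/2}$, it converges \emph{uniformly} in $\theta\in\Theta$ to $C\,f(\theta|x)/\sqrt{|H_L^\theta|}$ for some positive constant $C$ depending only on $A$ and $M$. Since multiplication by a positive constant does not change an $\argmax$, the set limit defining $\hat{\theta}_{L,A}(x)$ will then be contained in $\argmax_\theta f(\theta|x)/\sqrt{|H_L^\theta|}=\hat{\theta}(x)$, with non-emptiness coming from compactness of $\Theta$.

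First, I Taylor-expand $L$ about the diagonal. Because $L(\theta,\theta)=0$ is a global minimum of $L_\theta(\cdot)$ by \eqref{Eq:distinct}, the gradient in the first argument vanishes at $\theta'=\theta$, so well-behavedness of $L$ yields
\[
L(\theta',\theta)=\tfrac{1}{2}(\theta'-\theta)^T H_L^\theta (\theta'-\theta)+O(|\theta'-\theta|^3),
\]
with the remainder uniform in $(\theta,\theta')\in\Theta\times\Theta$ by compactness. Next, I split $V_k(\theta)$ into integrals over a fixed neighbourhood $B_\delta(\theta)$ and over its complement. Discriminativity of $L$ guarantees $\inf_{\theta'\notin B_\delta(\theta)} L(\theta',\theta)>0$, uniformly in $\theta$ by compactness, so $A(kL(\theta',\theta))=0$ outside $B_\delta(\theta)$ as soon as $k$ exceeds a threshold depending only on $A$'s threshold value $a_0$ and on $\delta$; only the near-integral therefore contributes for large $k$.

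Performing the substitution $\theta'=\theta+u/\sqrt{k}$ in the near-integral turns it into
\[
k^{M/2}V_k(\theta)=\int A\!\left(\tfrac{1}{2}u^T H_L^\theta u+O(k^{-1/2}|u|^3)\right)\,f\!\left(\theta+u/\sqrt{k}\,|\,x\right)\,\mathrm{d}u.
\]
Uniform positive-definiteness of $H_L^\theta$ on the compact $\Theta$ confines the effective domain in $u$ to a set bounded uniformly in $\theta$ (since $A$ vanishes once its argument exceeds $a_0$); on this bounded region, continuity of $f(\cdot|x)$, of $A$, and of $H_L^\theta$, together with the uniform cubic error bound, allow me to pass to the limit by dominated convergence and conclude
\[
k^{M/2}V_k(\theta)\longrightarrow f(\theta|x)\int A\bigl(\tfrac{1}{2}u^T H_L^\theta u\bigr)\,\mathrm{d}u=\frac{C\,f(\theta|x)}{\sqrt{|H_L^\theta|}},
\]
where the final equality uses the substitution $v=(H_L^\theta)^{1/2}u$ to pull out $|H_L^\theta|^{-1/2}$, with $C\defeq\int A(\tfrac{1}{2}|v|^2)\,\mathrm{d}v>0$.

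To finish, I take any sequence $\hat{\theta}_k\in\argmax_\theta V_k(\theta)$---non-empty for each $k$ since $V_k$ is continuous on the compact $\Theta$---and pass to a convergent subsequence $\hat{\theta}_{k_j}\to\theta^*$. Uniform convergence and continuity of $\theta\mapsto f(\theta|x)/\sqrt{|H_L^\theta|}$ then give $f(\theta^*|x)/\sqrt{|H_L^{\theta^*}|}\ge f(\theta|x)/\sqrt{|H_L^\theta|}$ for every $\theta\in\Theta$, i.e.\ $\theta^*\in\hat{\theta}(x)$, and also show $\hat{\theta}_{L,A}(x)\ne\emptyset$. The main obstacle is establishing uniformity of the Laplace expansion: controlling the cubic remainder, and verifying that the shrinking support of $A(kL(\cdot,\theta))$ around $\theta$ does so at a rate independent of $\theta$. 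Both follow from compactness of $\Theta$ combined with well-behavedness of $L$ and the assumed positive-definiteness of $H_L^\theta$.
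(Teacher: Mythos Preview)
Your proposal is correct and follows essentially the same approach as the paper: a Laplace-type expansion of the integral $\int_\Theta A(kL(\theta',\theta))f(\theta'|x)\,\mathrm{d}\theta'$ using the second-order Taylor expansion of $L$ about the diagonal, a rescaling that pulls out the factor $k^{-M/2}$ and the Jacobian $|H_L^\theta|^{-1/2}$, and then an argument that the convergence is uniform in $\theta$ so that the $\argmax$ and the limit may be exchanged. Your explicit substitution $\theta'=\theta+u/\sqrt{k}$ is slightly cleaner than the paper's ``Jacobian transformation'' phrasing, and your use of dominated convergence on the bounded $u$-domain plays the same role as the paper's explicit bounds via $m$, $H_{\min}$, $A'_{\max}$ and $f'_{\max}$; the final compactness/subsequence argument for non-emptiness matches the paper's appeal to Bolzano--Weierstrass.
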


\begin{proof}
When calculating
\begin{equation}\label{Eq:AkL}
\int_\Theta f(\theta'|x) A(kL(\theta',\theta)) \text{d}\theta'
\end{equation}
for asymptotically large $k$ values
one only needs to consider the integral over $B(\theta,\epsilon)$, the ball of
radius $\epsilon$ around $\theta$, for any $\epsilon>0$, as for a sufficiently
large $k$, the rest of the integral values will be zero by the
discriminativity assumption.

By definition of $H_L^{\theta}$, the second order Taylor approximation for
$L(\theta',\theta)$ around $\theta$ is
\begin{equation}\label{Eq:Taylor}
L(\theta',\theta)=\frac{1}{2}(\theta'-\theta)^T H_L^{\theta} (\theta'-\theta)\pm \frac{m}{6} |\theta'-\theta|^3,
\end{equation}
where $m$ is the maximum absolute third derivative of $L$ over all
$\Theta\times\Theta$ and all possible differentiation directions.
This maximum exists because the third derivative is continuous and
$\Theta\times\Theta$ is compact.

As $k$ grows to infinity, the value of \eqref{Eq:AkL} therefore tends to
\[
f(\theta|x)\int_{B(\theta,\epsilon)} A\left(\frac{k}{2}(\theta'-\theta)^T H^\theta_L (\theta'-\theta)\right)\text{d}\theta',
\]
which can be computed via a Jacobian transformation as
\begin{equation}\label{Eq:prop}
\frac{f(\theta|x)}{\sqrt{|H^\theta_L|}}\left(\frac{2^{M/2}\int_{\mathbb{R}^M} A(|\omega|^2)\text{d}\omega}{\sqrt{k^M}}\right),
\end{equation}
where the parenthesised expression on the right is a multiplicative factor
independent of $\theta$.

We have therefore shown for $F_x(\theta)\defeq f(\theta|x)/\sqrt{|H^\theta_L|}$
that
\begin{equation}\label{Eq:maxlim}
\argmax_{\theta\in\Theta} \lim_{k\to\infty} \sqrt{k^M} \int_\Theta f(\theta'|x) A(kL(\theta',\theta)) \text{d}\theta' = \argmax_{\theta\in\Theta} F_x(\theta)=\hat{\theta}(x).
\end{equation}

What we are trying to compute, however, is
\begin{equation}\label{Eq:LA}
\hat{\theta}_{L,A}(x)=\setlim_{k\to\infty}\argmax_{\theta\in\Theta} \int_\Theta f(\theta'|x) A(kL(\theta',\theta)) \text{d}\theta'.
\end{equation}
The multiplicative difference of $\sqrt{k^M}$ between \eqref{Eq:LA} and
\eqref{Eq:maxlim} is immaterial, as its addition into \eqref{Eq:LA} would not
have changed the argmax value, but the reversal in the order of the
quantifiers can, at least potentially, change the result.

In order to show that no value other than a maximiser of $F_x$ can be part of
$\hat{\theta}_{L,A}(x)$,
we need to prove that outside of any neighbourhood of a maximiser of $F_x$,
$F_x$ is bounded from above away from its maximum, and that the rates of
convergence in $k$ over all $\theta$ are uniformly bounded.

The fact that $\theta$ values outside of a neighbourhood of the maximisers
of $F_x$ are bounded away from the maximum is due to $F_x$ being continuous
over a compact space.  Let us therefore bound the convergence rates of
\eqref{Eq:AkL} as $k$ goes to infinity.

The value of \eqref{Eq:AkL} is equal to that of \eqref{Eq:prop} up to a
multiplicative factor to do with the changes in $f(\theta'|x)$ and the changes
in
\begin{equation}\label{Eq:Lratio}
\frac{A(kL(\theta',\theta))}{A\left(\frac{k}{2}(\theta'-\theta)^T H_L^\theta (\theta'-\theta)\right)}
\end{equation}
over the volume of the integral, with the effective
volume of the integral being determined by
$\{\theta' : L(\theta',\theta)\le a_0/k\}$, where $a_0$ is the threshold
value of $A$. This, in turn, bounds
$|\theta'-\theta|$ within the effective volume of the integral as a function
of $H_{\text{min}}$, the minimum eigenvalue
of $H_L^{\theta'}$ for any $\theta'\in\Theta$
(which is known to be positive, because $H_L^{\theta'}$ was assumed to be a
positive definite matrix for all $\theta'$ and its minimum eigenvalue is a
continuous function of $\theta'$ over the compact space $\Theta$).

Equation \eqref{Eq:Taylor} allows us to bound \eqref{Eq:Lratio},
by means of $m$ and of $A'_{\text{max}}$, the maximum absolute derivative of
$A$, while the
bound on $|\theta'-\theta|$ allows us to bound $f(\theta'|x)$ similarly, as
$f(\theta|x)\pm f'_{\text{max}} |\theta'-\theta|$, where $f'_{\text{max}}$
is the global maximum absolute derivative of $f(\theta'|x)$ over $\theta'$ at
$x$, for any $\theta'\in\Theta$ and in any direction.

The four elements that globally bound the speed of convergence are therefore
$m$, $H_{\text{min}}$, $A'_{\text{max}}$ and $f'_{\text{max}}$, all of which
are finite, positive numbers, because they are extreme values of continuous,
positive functions over a compact space, so the
convergence rates are all uniformly bounded, as required.

As a last point, we remark that the setlim of \eqref{Eq:LA} is calculated
entirely on subsets of the compact set $\Theta$, so by the
Bolzano-Weierstrass theorem \citep{bartle2011introduction}
$\hat{\theta}_{L,A}$ cannot be the empty set.
\end{proof}

We now turn to prove our main claim.

\begin{proof}[Proof of Theorem~\ref{T:main}]
By Lemma~\ref{L:likelihood} we know $L$ to be a likelihood-based loss
function, and by Lemma~\ref{L:r} we know its value at $L(P,Q)$ depends only
on the value of $c[P,Q]$. With these prerequisites, we can use
Lemma~\ref{L:Fisher} to conclude that up to a nonzero constant
multiple $\gamma$
its $H_L^{\theta}$ is the Fisher information matrix $I_{\theta}$. This,
in turn, we've assumed to be positive definite by requiring
$\hat{\theta}_\text{WF}$ to be well-defined.

Furthermore, $\gamma$ cannot be negative, as in combination with a positive
definite Fisher information matrix this would indicate that $H_L^{\theta}$
is not positive semidefinite, causing $L$ to attain negative values in the
neighbourhood of $\theta$.

It is therefore the case that $H_L^{\theta}$ must be positive definite, and
Lemma~\ref{L:Hessian} can be used to conclude the correctness
of the theorem.
\end{proof}

\subsection{Feasibility and necessity}\label{S:contfnn}

By convention, when using the ax\-iomat\-ic approach one also shows that
the assumptions taken are all feasible, and that all axioms are necessary.
We do so, in the context of continuous problems, in this section.

\subsubsection{Feasibility}\label{S:feasibility}

An $f$-divergence \citep{ali1966general} is a loss function $L$ that can be
computed as
\[
L(p,q)=\int_X F(p(x)/q(x)) q(x) \text{d}x.
\]
We call $F$ the $F$-function of the $f$-divergence (refraining from using
the more common term ``$f$-function'' so as to avoid unnecessary
confusion with our probability density function).
It should be convex and satisfy $F(1)=0$.

Any $f$-divergence whose $F$-function has 3 continuous derivatives and
satisfies $F''(1)>0$ meets most of our requirements
regarding a well-behaved $L$ function. The one outstanding requirement is that
of $\mathcal{M}$-continuity. In the very general case discussed here, where,
for example, $f(x,\theta)$
can diverge to infinity over $x$, there is a risk that for some estimation
problems under some $L$ functions a subset of $X$ of diminishing measure can
have a non-negligible impact on the value of $L(p,q)$, for some $p$ and $q$.
In order to guarantee $\mathcal{M}$-continuity for all estimation problem
sequences, we further require our chosen $f$-divergence's $F$-function to not be
in absolute value super-linear in its input, i.e.\ for its absolute value,
$|F(r)|$, to be upper-bounded by some linear function $Ar+B$. When this is the
case,
\begin{align*}
|L(p,q)|&=\left|\int_X F(p(x)/q(x)) q(x) \text{d}x\right|\le\int_X |F(p(x)/q(x))| q(x) \text{d}x \\
&\le\int_X\left(A\frac{p(x)}{q(x)}+B\right) q(x) \text{d}x= A+B<\infty,
\end{align*}
so $\mathcal{M}$-continuity holds.

An example of a commonly-used $L$ function satisfying all criteria for
well-behavedness is squared Hellinger distance \citep{pollard2002user},
\[
H^2(p,q)=\frac{1}{2}\int_X \left(\sqrt{p(x)}-\sqrt{q(x)}\right)^2\text{d} x,
\]
which is the $f$-divergence whose $F$-function is $F(r)=1-\sqrt{r}$.

\subsubsection{Necessity of IRP}

A well-known loss function that satisfies all axioms except IRP is
quadratic loss,
$L(\theta_1,\theta_2)=|\theta_1-\theta_2|^2$.
A risk-averse estimator with this loss function yields the
$f$-MAP estimate.

A more involved example for this is the loss function
$L(\theta_1,\theta_2)=\sqrt[M]{f(\theta_2)^2} |\theta_1-\theta_2|^2$.
From Lemma~\ref{L:Hessian}, we can deduce that this loss function yields the
Maximum Likelihood (ML) estimate. This is telling, because while the loss
function satisfies all axioms except IRP, the ML estimate itself is invariant
to the representation of the problem's parameter space. In this sense, ML
(and, as a corollary, certain penalised ML estimators) can be said to satisfy
all our axioms on the level of the overall behaviour of the estimate, while WF
remains unique in the fact that it is the only risk-averse estimate
\emph{whose loss function} satisfies all axioms. It is therefore important
to recognise the loss function as the antecedent, and
the behaviour of the estimate as a whole as its consequence.

\subsubsection{Necessity of IRO}

For a loss function satisfying all axioms except IRO,
let $L(p,q)=\int_X q(x) (p(x)-q(x))^2 \text{d}x$, which is the expected
square difference between the probability densities at $\boldsymbol{x}\sim q$.
The risk-averse estimator over $L$ is described by Lemma~\ref{L:Hessian}.

Calculating $H_L^\theta$ we get
\[
H_L^\theta(i,j)=\mathbf{E}_{\boldsymbol{x}\sim f_\theta}\left(2\left(\frac{\partial f_\theta(\boldsymbol{x})}{\partial \theta(i)}\right)\left(\frac{\partial f_\theta(\boldsymbol{x})}{\partial \theta(j)}\right)\right),
\]
which is different to the Fisher information matrix, and defines an
estimator that is not WF.

\subsubsection{Necessity of IIA}

We construct a loss function $L$ that satisfies all axioms except IIA as
follows.
Let $L_1$ and $L_2$ be two well-behaved loss functions satisfying all axioms
(such as, for example, two well-behaved $f$-divergences)
and let $t$ be a threshold value.

Consider the function
\[
P(\theta)=\mathbf{P}(L_1(\boldsymbol{\theta},\theta)\le t).
\]

By construction, this function is independent of representation.

Define $L(\theta_1,\theta_2)=P(\theta_2)^2 L_2(\theta_1,\theta_2)$.

By Lemma~\ref{L:Hessian}, the resulting risk-averse estimator will equal
\[
\argmax_{\theta\in\Theta} \frac{f(\theta|x)}{\sqrt{|H_L^\theta|}}=\argmax_{\theta\in\Theta} \frac{f(\theta|x)}{P(\theta)^M\sqrt{|H_{L_2}^\theta|}},
\]
because $P(\theta_2)^2$ is independent of $\theta_1$ and therefore acts as a
constant multiplier in the calculation of the Hessian.

This new estimator is different to the Wallace-Freeman estimator in the fact
that it adds a weighing factor $P(\theta)^M$. Any other weighing factor can
similarly be added, given that it is a function of $\theta$ that is
independent of representation and satisfies the well-behavedness criteria of
a loss function.

\section{Parameter estimation based on finite information}

The final case remaining is that of estimation problems with a continuous
$\boldsymbol{\theta}$ but a discrete $\boldsymbol{x}$. We will refer to such
problems as \emph{semi-continuous}. Semi-continuous problems are both a common
case and an important one. They are common because we often wish to estimate
continuous parameters based on finite information. They are important because
this case is central in MML theory \citep{Wallace2005}, which is the only
context in which the Wallace-Freeman estimate has previously been in use.

The axioms IRP, IRO and IIA, which sufficed for continuous problems, are not
sufficient to fully characterise a solution in the semi-continuous case.
To demonstrate this, consider the classic example of estimating the
$p$ parameter in a binomial distribution, for a known $n$.
For simplicity of presentation, instead of observing $\text{Bin}(n,p)$,
we will assume our observations are $n$ Bernoulli trials.
The observation space is therefore $X=\{0,1\}^n$.

In addition, we modify the classic problem slightly by limiting our parameter
space to only $\Theta=[\epsilon,1/2]$ for some small $\epsilon>0$,
rather than its full possible range.
We assume that the prior for $p$ is uniform within $\Theta$.

We will show regarding estimators of the form
\begin{equation}\label{Eq:genloss}
\hat{\theta}(x)=\argmax_{\theta\in\Theta} f(\theta|x)/F(\theta),
\end{equation}
where $F$ can be any twice continuously differentiable function into
$\mathbb{R}^{+}$, that they can be described as risk-averse estimators over
loss functions satisfying all of IRP, IRO and IIA.

To construct a loss function that will satisfy \eqref{Eq:genloss}, let
$G(\theta)=\int_{\epsilon}^{\theta} F(\theta') \text{d}\theta'$ and for any
$p\in \Theta$ consider the likelihood $P_p$, i.e.\ the distribution of
$\boldsymbol{x}$ given $p$. This equals
\[
P_p(x)= \prod_{i=1}^{n} p^{x(i)} (1-p)^{1-x(i)},
\]
where $x(i)$ is the result of the $i$'th Bernoulli trial.

Consider, now, that for all $p$,
$\prod_{x'\in X} P_p(x') = \left(p(1-p)\right)^{n 2^{n-1}}$.
This allows us to retrieve the original value of $p$ from the distribution as
\[
p=\tilde{p}(P_p)\defeq\frac{1}{2}-\sqrt{\frac{1}{4}-\left(\prod_{x'\in X} P_p(x')\right)^{\frac{1}{n 2^{n-1}}}}.
\]

We can now use as a likelihood-based loss function
\[
L(P_p,P_q)=\frac{1}{2}(G(\tilde{p}(P_p))-G(\tilde{p}(P_q)))^2.
\]
This loss function leads to \eqref{Eq:genloss} because $H_L^\theta$ equals
$F(\theta)^2$.

The key to resolving the semi-continuous case is Axiom ISI. This axiom
states, for example, that when estimating the binomial parameter, it should
not matter whether we observe $n$ Bernoulli trials or merely their sum.
The estimate should only depend on the sufficient statistic.

Our theorem for the semi-continuous case is as follows.

\begin{thm}\label{T:semicontinuous}
If $(\boldsymbol{x},\boldsymbol{\theta})$ is a well-behaved semi-continuous
estimation problem for which $\hat{\theta}_{\text{WF}}$ is a well-defined
set estimator, and if $L$ is a well-behaved loss function, discriminative
for $(\boldsymbol{x},\boldsymbol{\theta})$,
that satisfies all of IIA, IRP, IRO and ISI,
then any risk-averse estimator
$\hat{\theta}_{L,A}$ over $L$, regardless of its attenuation function $A$,
is a well-defined set estimator, and for every $x$,
\[
\hat{\theta}_{L,A}(x)\subseteq \hat{\theta}_{\text{WF}}(x).
\]

In particular, if $\hat{\theta}_{\text{WF}}$ is a well-defined point estimator,
then so is $\hat{\theta}_{L,A}$, and
\[
\hat{\theta}_{L,A}=\hat{\theta}_{\text{WF}}.
\]
\end{thm}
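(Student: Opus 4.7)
My plan is to mirror the proof of Theorem~\ref{T:main} step by step. Lemmas~\ref{L:likelihood}, \ref{L:Fisher}, and \ref{L:Hessian} are already stated (and their proofs remain valid) for arbitrary well-behaved $\boldsymbol{\theta}$-continuous estimation problems, which include the semi-continuous case; indeed, the proof of Lemma~\ref{L:Fisher} already invokes ISI in its footnote to handle discrete $\boldsymbol{x}$. The one link in the chain whose hypothesis requires $\boldsymbol{x}$ to be continuous is Lemma~\ref{L:r}, so the substantive task is to re-establish the conclusion of Lemma~\ref{L:r}---that $L(P,Q)$ depends only on $c[P,Q]$---when $P$ and $Q$ are discrete, and this is exactly where Axiom ISI enters.

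To do so, I would reduce the given semi-continuous problem to an equivalent continuous one with the same $L$ value and the same $c$ function. Enumerate the discrete support as $X=\{x_1,x_2,\ldots\}$, fix pairwise-disjoint open boxes $B_i\subseteq\mathbb{R}^N$ of equal positive Lebesgue volume, and attach an auxiliary variable $\boldsymbol{v}$ whose distribution conditional on $\boldsymbol{x}=x_i$ is uniform on $B_i$. Since $\boldsymbol{v}$ is independent of $\boldsymbol{\theta}$ given $\boldsymbol{x}$, ISI gives $L^{(\boldsymbol{x},\boldsymbol{\theta})}=L^{((\boldsymbol{x},\boldsymbol{v}),\boldsymbol{\theta})}$. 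Conversely, the disjointness of the $B_i$ makes $\boldsymbol{x}$ a deterministic function of $\boldsymbol{v}$, so $\boldsymbol{x}$ is trivially $\boldsymbol{\theta}$-independent given $\boldsymbol{v}$, and a second application of ISI combined with a coordinate-permuting IRO transformation yields $L^{((\boldsymbol{x},\boldsymbol{v}),\boldsymbol{\theta})}=L^{(\boldsymbol{v},\boldsymbol{\theta})}$. The problem $(\boldsymbol{v},\boldsymbol{\theta})$ is a well-behaved continuous problem with the piecewise-constant likelihood $f_\theta^{(\boldsymbol{v})}(v)=P_\theta(x_i)/|B_i|$ on each $B_i$, and a direct computation shows that its $c$ function agrees with that of the original discrete likelihoods: the Radon-Nikodym derivative on $B_i$ is $P_{\theta_1}(x_i)/P_{\theta_2}(x_i)$, so its distribution under $f_{\theta_2}^{(\boldsymbol{v})}$ coincides with the distribution of $P_{\theta_1}(\boldsymbol{x})/P_{\theta_2}(\boldsymbol{x})$ under $P_{\theta_2}$. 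Lemma~\ref{L:r} applied to $(\boldsymbol{v},\boldsymbol{\theta})$ then gives the required conclusion.

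From here the proof concludes exactly as for Theorem~\ref{T:main}: Lemma~\ref{L:Fisher} gives $H_L^{\theta}=\gamma I_\theta$ for a nonzero $\gamma$ (positive for the same reason as in Theorem~\ref{T:main}, since otherwise $L$ would take negative values near $\theta$), and Lemma~\ref{L:Hessian} yields $\hat\theta_{L,A}(x)\subseteq\hat\theta_{\text{WF}}(x)$, with equality when $\hat\theta_{\text{WF}}$ is a well-defined point estimator. The main obstacle I anticipate is the ISI-based reduction itself: justifying both directions of ISI (the addition of $\boldsymbol{v}$ and the deletion of $\boldsymbol{x}$), verifying that the coordinate permutation is a legitimate IRO transformation on the augmented $\mathbb{R}^{2N}$-valued observation, and checking that the well-behavedness hypotheses---three-times continuous differentiability of the likelihood in $\theta$, compactness of $\Theta$, and $L$'s problem-continuity---transfer cleanly to the constructed continuous problem.
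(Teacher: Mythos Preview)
Your proposal is correct, and its overall architecture---reuse Lemmas~\ref{L:likelihood}, \ref{L:Fisher}, \ref{L:Hessian} verbatim and replace Lemma~\ref{L:r} by an ISI-based argument---matches the paper's. The replacement step, however, is carried out differently. You \emph{continuize} the problem: attach an auxiliary $\boldsymbol{v}$ uniform on disjoint boxes, use ISI twice (once to add $\boldsymbol{v}$, once to drop $\boldsymbol{x}$) together with an IRO coordinate swap, obtain a genuine continuous problem with the same $c$ function, and then invoke Lemma~\ref{L:r} itself. The paper instead \emph{reduces to a sufficient statistic}: it uses ISI to append the scalar $r_{P,Q}(\boldsymbol{x})$, swaps via IRO, embeds $P$ and $Q$ in a mixture family $\gamma P+(1-\gamma)Q$ so that $\boldsymbol{x}$ becomes $\boldsymbol{\theta}$-independent given $r_{P,Q}(\boldsymbol{x})$ for \emph{every} $\theta$, and applies ISI again to discard $\boldsymbol{x}$, leaving only the distribution of $r_{P,Q}(\boldsymbol{x})$ under $Q$---which is exactly the information in $c[P,Q]$. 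The paper's route is shorter and never leaves the semi-continuous world, so it avoids the well-behavedness checks on the auxiliary continuous problem that you correctly flag as the main obstacle; your route has the compensating virtue of recycling Lemma~\ref{L:r} wholesale rather than re-deriving its conclusion.
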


\begin{proof}
The proof is essentially identical to that of Theorem~\ref{T:main}. The only
change is that we can no longer apply Lemma~\ref{L:r}. However, we claim that
$L(P,Q)$ only depends on $c[P,Q]$ despite this, for which reason we can still
apply Lemma~\ref{L:Fisher} and Lemma~\ref{L:Hessian} as before to complete
the proof.

To show this, consider any two pairs $(P_1,Q_1)$, $(P_2,Q_2)$ such that
$c[P_1,Q_1]=c[P_2,Q_2]$. We claim that $L(P_1,Q_1)=L(P_2,Q_2)$, and prove this
as follows.

We begin by using ISI to map every $\boldsymbol{x}$ value, $x$, to
$(x,r_{P,Q}(x))$. Next, we use IRO to map $(x,r_{P,Q}(x))$ to
$(r_{P,Q}(x),x)$.

By the definition of $r_{P,Q}$, $x$ is independent of the choice of $P$ versus
$Q$, given that $r_{P,Q}$ is known.
For our purposes, however, we need to construct an entire semi-continuous
estimation problem for which this independence is satisfied for every choice
$\theta$ of $\boldsymbol{\theta}$, and not just for $P_{\theta}\in\{P,Q\}$.

Such a problem is not difficult to construct.
For example, if $M$, the dimension of the
parameter space, is $1$, one can define a problem where the likelihood of
$\boldsymbol{x}$ is $\gamma P + (1-\gamma) Q$, and one is required to estimate
the value of $\gamma\in[0,1]$. A similar construction for a general $M$ was
shown in the proof of Lemma~\ref{L:r}.

In this new estimation problem, one can use ISI to determine that the value
of $L(P,Q)$ will not change if the observations were to be mapped from
$(r_{P,Q}(x),x)$ to $r_{P,Q}(x)$.

The value of $L(P,Q)$ can therefore only depend on the total probability
assigned by $Q$ to $\boldsymbol{x}$ values, $x$, of each particular
$r_{P,Q}(x)$ value. (It can also depend on the value thus assigned by $P$,
but this can be calculated as the value assigned by $Q$ multiplied by
$r_{P,Q}(x)$, so is equivalent information.)

This is precisely the information conveyed by $c[P,Q]$.
\end{proof}

\subsection{Feasibility and necessity}

Most of what is needed in order to prove feasibility and necessity for the
semi-continuous case is identical to what was already done for the
continuous case in Section~\ref{S:contfnn}, with probability mass functions
replacing probability density functions and sums replacing integrals.
We will not repeat this here, where the original proofs hold
\textit{mutatis mutandis}.

In terms of feasibility, squared Hellinger distance (in its
discrete-probability formulation) remains a proof that our axioms are
feasible and that the WF estimate is a legitimate solution.

Our constructions for proving that IRP and IIA are both necessary hold, too,
and the necessity of ISI has been demonstrated with the example of
binomial estimation.

To demonstrate that IRO is necessary, let $x(1:k)$ be the value of $x$'s
first $k$ dimensions let $x(k)$ be the value of its
$k$'th dimension alone, and for a distribution $P$, let $P^y_k$ be the
distribution of $x(k)$ given that $x(1:k-1)=y$. Consider, now, a function
$L(P,Q)$ calculated in the following way, over another loss function, $L_1$.
\[
L(P,Q)=\sum_{k=1}^{N} \mathbf{E}_{\boldsymbol{x}\sim Q}\left[L_1(P^{\boldsymbol{x}(1:k-1)}_k,Q^{\boldsymbol{x}(1:k-1)}_k)\right],
\]
where $L_1$ is a loss function that only needs to be defined for estimation
problems where the dimensionality of the observation space is $1$.

By construction, this $L$ satisfies ISI for any $L_1$, because any dimension
that does not add information also does not add to the value of $L(P,Q)$.

The construction for an $L$ not satisfying IRO on continuous problems
now works as a construction for $L_1$ in the semi-continuous case, once
probability densities are changed to probability masses
and integrals to sums.

We have therefore shown that all axioms are necessary, that they are jointly
feasible, and that together they uniquely characterise WF in the
semi-continuous scenario.

\end{document}